\documentclass[11pt]{article}

\usepackage[margin=1in]{geometry}

\usepackage[round]{natbib}
\usepackage{amsmath,amssymb,amsthm,mathtools}
\usepackage{booktabs}
\usepackage{graphicx}
\usepackage{microtype}
\usepackage{enumitem}
\usepackage{verbatim}
\usepackage{longtable}
\usepackage{tikz}
\usetikzlibrary{arrows.meta,calc}
\usepackage[dvipsnames]{xcolor}
\usepackage[colorlinks=true,linkcolor=NavyBlue,citecolor=NavyBlue,urlcolor=NavyBlue]{hyperref}
\usepackage{threeparttable}

\newcommand{\apptable}{%
  \centering
  \setlength{\abovecaptionskip}{11pt}%
  \setlength{\belowcaptionskip}{11pt}}

\theoremstyle{plain}
\newtheorem{theorem}{Theorem}[section]

\newtheorem{lemma}[theorem]{Lemma}

\theoremstyle{definition}

\theoremstyle{remark}

\DeclareMathOperator{\tr}{tr}
\DeclareMathOperator{\diag}{diag}
\DeclareMathOperator{\Var}{Var}
\DeclareMathOperator{\Cov}{Cov}

\DeclareMathOperator{\rank}{rank}
\newcommand{\T}{^{\mathsf{T}}}
\newcommand{\R}{\mathbb{R}}
\newcommand{\E}{\mathbb{E}}
\newcommand{\Ez}{\E_z}
\newcommand{\one}{\mathbf{1}}
\newcommand{\vhat}{\widehat{\Var}_N}
\newcommand{\Kt}{\widetilde{K}}

\newcommand{\Rsq}{R^{2}}
\newcommand{\Rmarg}{R^{2}_{\mathrm{marg}}}
\newcommand{\Rreal}{R^{2}_{\mathrm{real}}}
\newcommand{\Remp}{R^{2}_{\mathrm{emp}}}

\makeatletter
\newcommand{\supplementsizes}{%
  \renewcommand\normalsize{%
    \@setfontsize\normalsize\@xpt\@xiipt
    \abovedisplayskip 10\p@ \@plus2\p@ \@minus5\p@
    \abovedisplayshortskip \z@ \@plus3\p@
    \belowdisplayshortskip 6\p@ \@plus3\p@ \@minus3\p@
    \belowdisplayskip \abovedisplayskip}%
  \renewcommand\small{\@setfontsize\small\@ixpt{11}}%
  \renewcommand\footnotesize{\@setfontsize\footnotesize\@viiipt{9.5}}%
  \renewcommand\scriptsize{\@setfontsize\scriptsize\@viipt\@viiipt}%
  \renewcommand\tiny{\@setfontsize\tiny\@vpt\@vipt}%
  \normalsize}
\makeatother

\title{\bfseries R2-GP: Variance-Calibrated Priors for Additive Gaussian Processes}

\usepackage{authblk}

\author[2]{Javier Enrique Aguilar}
\author[1]{Soham Mukherjee}
\author[1,*]{David Kohns}
\affil[1]{Aalto University, Finland}
\affil[2]{TU Dortmund, RC TRUST (UA Ruhr), Germany}
\affil[*]{Corresponding author: \href{mailto:david.kohns@aalto.fi}{\texttt{david.kohns@aalto.fi}}}

\hypersetup{
  pdftitle={R2-GP: Variance-Calibrated Priors for Additive Gaussian Processes},
  pdfauthor={Javier Enrique Aguilar, Soham Mukherjee, David Kohns}}

\date{}

\begin{document}
\maketitle

\begin{abstract}
\noindent
We develop R2-GP, an R2D2-type prior for Gaussian process (GP) generalised additive models (GAMs) that calibrates explained variance ($R^2$) for Matérn covariance kernels. The function variance for standard Matérn kernels depends jointly on the effect magnitude and length-scale.
As such, smooth and rough components with identical amplitudes contribute very different amounts of variation. With independent priors on component amplitudes, this variation results in variance overestimation, especially in higher dimensions.
We overcome this by taking direct control over each component's expected contribution to prior explained variance. We do this by rescaling each component based on its length scale and observed inputs, and modelling each rescaled variance as a share of the function variance induced by a global $R^2$ prior.
This allows R2-GP to scale well to many components. To also scale to large sample sizes, we extend R2-GP to Hilbert space approximation methods, a framework for kernels exploiting their spectral decompositions for fast and efficient inference. Our simulations and real-world case studies demonstrate superior generalisation and parameter recovery over independent variance priors.
\end{abstract}

\section{Introduction}
\label{sec:intro}
Gaussian processes (GPs) are a class of probabilistic methods for modelling non-linear functions. They do so by assuming a multivariate Gaussian prior distribution over a finite set of these functions and are specified through a mean function and a covariance kernel \citep{Rasmussen1995}. Common choices of covariance kernel involve the Mat\'ern class of functions \citep{rasmussen2006gaussian}, defined using a length scale parameter and a magnitude parameter (also called GP marginal variance). The former controls the correlation decay rate between two arbitrary inputs while the latter controls prior variance of the functions at a single input.

GP priors have been specifically used in generalised additive models \citep[GAMs;][]{Hastie1986gams, hastie1990gam} as an approach to model smooth functions \citep{Wood2017gams}. In GAMs, a response $y \in \R^N$ is modelled using multiple covariates $X \in \R^{N\times D}$ such that
\begin{equation}\label{eq:gp-gams}
  y_i = \mu_0 + \sum^D_{d=1}f_d(x_{id}) + \varepsilon_i,
  \qquad \varepsilon_i \sim N(0,\sigma^2),
\end{equation}
for $i = 1,\dots,N$, where $\mu_0$ is the intercept, $\varepsilon_i$ is the residual error and $f_d$ is a GP specific to the $d$-th covariate. %
Additive GPs arise as a special case of the general class of models presented in \citet{duvenaud2011additive}.

In GAMs, modelling a response with many covariates introduces a variance explosion problem familiar from high-dimensional linear regression. If each of the $D$ components receives an independent variance prior with positive mass on non-zero values, the total signal variance accumulates as $D$ grows. Consequently, the implied prior on the proportion of explained variance $R^2$ pathologically concentrates near one. This may lead to overfitting and contradict the modeller's domain knowledge about $R^2$, rendering the model predictively inconsistent \citep{riha2026select}.

Preventing this variance accumulation requires explicitly controlling the model's total variance. In regression contexts, a natural approach is to place a prior on $R^2$ and partition it across the components, analogous to the R2D2 prior for linear models \citep{zhang2022r2d2}. The mapping between a component's magnitude parameter and the observed function variance one wishes to control is however blurred by the simultaneous impact of the length scale and the input design. Because of this entanglement, smooth and rough components with identical magnitudes contribute vastly different amounts to observed function variation (Figure~\ref{fig:variance-gap}).

We overcome this structural issue with our R2-GP prior. By rescaling each component with a normaliser which is derived from its length scale and the input design, R2-GP forces the component's magnitude to exactly match its expected variation across the data. This calibration allows a global $R^2$ prior which is partitioned with a simplex prior that forces components to compete for a share of the variance budget.

A drawback of this normalisation is the computational burden it adds to posterior inference. Since the normaliser depends on the length scale, it must be recomputed at every gradient evaluation when using MCMC methods like Hamiltonian Monte Carlo \citep{neal2011mcmc, hoffman2014nuts}. For an exact GP, this requires $O(N^2)$, added to the standard $O(N^3)$ cost of factorising the exact covariance matrix.

We solve this scalability issue by extending the R2-GP prior to Hilbert space approximation methods (HSGPs) \citep{solin2020hilbert, riutortmayol2023practical,Mukherjee2026moHSGPs}. HSGPs exploit the spectral decomposition of the kernel using basis functions $M$ for large sample scalability and have been extended to GAMs efficiently \citep{Timonen2025GPlong}. With the extension to HSGPs based on $M<<N$ basis functions, computing the normaliser simplifies to a single pre-computation of $O(NM)$ and $O(M)$. %

Code to replicate the experiments is available at \href{https://github.com/davkoh/r2-gp}{https://github.com/davkoh/r2-gp}.

\paragraph{Contributions.}
\label{sec:intro-contributions}
\begin{enumerate}%
  \item We develop R2-GP by extending the R2D2 family of priors to GPs for GAMs. We derive the necessary normalising approach of the kernel such that a prior over the GP's $R^2$ maps analytically to its expected sample variance. A Dirichlet decomposition of $R^2$ into contributions of its GAM components allows its application to high dimensions.
  \item We derive R2-GP prior for the Hilbert space approximation and show that the normaliser admits an exact finite-sample expression requiring $O(NM)$ preparation and $O(M)$ and scales to high dimensions.%
  \item Through simulations and real-world data, we show that the R2-GP prior outperforms independent priors for additive HSGPs in prediction and parameter recovery and sampling efficiency, and is competitive with alternate shrinkage priors.
\end{enumerate}

\paragraph{Related Work.}
\label{sec:intro-related}
Priors placed on $R^2$, since the original \citet{zhang2022r2d2}, have been extended to multilevel models \citep{aguilar2023r2d2m2}, generalised linear mixed models \citep{yanchenko2025r2d2}, autoregressions \citep{kohns2025arr2}, spatial models \citep{yanchenko2024spatial} and grouped predictors \citep{aguilar2025groupr2}. The present paper takes the variance allocation mechanism from that literature and addresses the question it leaves on how to extend this to GPs.%

Closest to our normalisation approach is the standardisation procedure of \citet{ferrari2025standardization}, who scale variance partitioning priors in latent Gaussian models by the variance of a zero-mean-constrained effect over an assumed covariate distribution, evaluated at fixed values of unknown kernel parameters such as the range of a Mat\'ern process. Our normaliser is instead a function of the length scale and is recomputed at every evaluation inside the sampler, which is what makes the $O(M)$ identity of Lemma~\ref{thm:Q} particularly efficient. Penalised complexity priors for Gaussian random fields \citep{simpson2017penalising,fuglstad2019constructing} shrink towards a simpler base model, whereas the normalisation here fixes the units in which the scale parameter is expressed, and the two could in principle be combined.

The remainder of the paper is organised as follows. Section~\ref{sec:r2hsgp} derives the normalised additive prior and its properties, among them the implied prior on explained variance. Section~\ref{sec:simulation} presents our simulation studies and Section~\ref{sec:case_study} demonstrates a real-world application. Proofs and additional results are collected in Appendix.%

\section{Methods}%
\label{sec:r2hsgp}
In our framework, the $d$-th column of the design matrix $X$ holds the observed values $x_{1d},\dots,x_{Nd}$ of covariate $d$. We use  GAM GP prior with independent processes per covariate,
\begin{equation}
  f_d = \alpha_d\, g_d,
  \qquad
  g_d \sim \mathcal{GP}\bigl(0,\, k(\cdot,\cdot\,;\rho_d)\bigr),
  \label{eq:intro-model}
\end{equation}
where $k$ is a stationary kernel of the Mat\'ern family, which includes the squared exponential (SE) kernel as a limiting case, with $k(x,x;\rho_d) = 1$, so that $g_d$ has unit marginal variance. Parameter $\alpha_d^2$ is the \emph{magnitude} of the component, which equals its pointwise \emph{marginal variance}, $\Var(f_d(x)) = \alpha_d^2$ at every input.\footnote{The GP literature often calls $\alpha_d^2$ itself the signal variance \citep{rasmussen2006gaussian}. Section~\ref{sec:intro-gap} shows that the two differ.} In contrast, the \emph{signal variance} is the expected variation that the function exhibits across the inputs, $\E_{f_d}[\Var_x(f_d(x)\mid f_d)]$. We denote by $\E_x$ and $\Var_x$ the expectation and variance operators with respect to $x$. In the following, we first derive the mathematical disconnect between these two variances,  then provide the necessary concepts of the Hilbert space (HS) approximation to finally define an efficient finite-sample normaliser, which we ultimately use to construct the R2-GP prior.

\subsection{Calibrating GPs for Explained Variance}
\label{sec:intro-gap}

In linear regression, placing a prior directly on the proportion of explained variance $\Rsq = \Var(f(x))/\Var(y)$ provides an interpretable way to specify regularisation on variance parameters of regression coefficients. %
For GAM GPs the $R^2$ identity is,
\begin{equation}
  \Rmarg =\frac{\sum_{d=1}^{D}\alpha_d^2}{\sum_{d=1}^{D}\alpha_d^2 + \sigma^2},
  \label{eq:intro-rmarg}
\end{equation}
where a prior $\pi_{R^2}$ maps to prior $\pi_{\alpha_d}$, conditional on $\sigma^2$. We define Equation~\ref{eq:intro-rmarg} as the marginal $R^2$, since it's marginal of the effect of the design and any other kernel parameters.
The magnitude $\alpha_d^2$, however is not equal to the signal variance. %
To see this, consider the total variance of component $f_d(x)$,
\begin{equation}
  \begin{aligned}
  \Var(f_d(x)) = \alpha_d^2
  ={}& \underbrace{\E_{f_d}\bigl[\Var_x(f_d(x)\mid f_d)\bigr]}_{\text{signal variance}} \\
  &+ \underbrace{\Var_{f_d}\bigl(\E_x[f_d(x)\mid f_d]\bigr)}_{\text{variance of the level}} .
  \end{aligned}
  \label{eq:intro-decomp}
\end{equation}
The first term is the expected variance of the function across the observed inputs, while the second term captures the variance of the average height of function realisations  (See Appendix~\ref{app:calibration}).

The second term is precisely the point of departure between a correlated GP and a linear regression. If $f(x)$ were a linear term with mean-centred inputs, the level, $\E_x[f_d(x)\mid f_d]\bigr)$, is deterministically zero, such that marginal variance equals signal variance. 
In contrast, the average level of a realised GP acts as a random constant shift across all input observations. Because this shift is constant with respect to $x$, it does not contribute to the function's variation over the inputs (the signal variance), and the level itself is completely absorbed by the intercept $\mu_0$ in \eqref{eq:gp-gams}.

The size of the level variance depends on the length scale $\rho_d$ and on the input design. Denoting by $\bar{k}_d = \E_{x,x'}[k(x,x';\rho_d)]$ the average kernel correlation between two independent inputs $x$ and $x'$, the level variance reduces to $\alpha_d^2\,\bar{k}_d$, and the signal variance is therefore
\begin{equation}
  \begin{gathered}
  \E_{f_d}\bigl[\Var_x(f_d(x)\mid f_d)\bigr] = \alpha_d^2\, Q_d(\rho_d),
  \\
  Q_d(\rho_d) = 1 - \bar{k}_d .
  \end{gathered}
  \label{eq:intro-Q}
\end{equation}
For the Mat\'ern family, whose correlations are positive and increase with the length scale, $Q_d(\rho_d) \in (0,1)$ (unless all inputs coincide). %
As Figure~\ref{fig:variance-gap} illustrates, smoother realisations (larger $\rho_d$) carry a larger part of their variance in the level, so that their signal variance falls well below the magnitude $\alpha_d^2$.

\begin{figure*}[t]
  \centering
  \resizebox{1\textwidth}{!}{\input{figures/tikz/fig01_variance_gap}
    }
  \caption{GP Realisations and Normaliser $Q(\rho)$. Realisations of a zero-mean GP with an SE kernel and unit marginal variance on $x \in [-1,1]$. The shaded band is $\pm 1$ marginal standard deviation. Panels (a) to (c) hold the marginal variance fixed and vary the length scale, and report the sample standard deviation averaged over $4000$ realisations. As $\rho$ grows, the signal variance falls even though the marginal variance is unchanged. Panel (d) shows the resulting normaliser $Q(\rho)$ of \eqref{eq:intro-Q}.}
  \label{fig:variance-gap}
\end{figure*}

Hence, we define the \emph{realised} $R^2$ as the modelling target:
\begin{equation}
  \Rreal = \frac{\sum_{d=1}^{D}\alpha_d^2\,Q_d(\rho_d)}{\sum_{d=1}^{D}\alpha_d^2\,Q_d(\rho_d) + \sigma^2} ,
  \label{eq:intro-r2}
\end{equation}
which, whenever $Q_d(\rho_d) < 1$, implies $\Rreal < \Rmarg$.
Furthermore, because $Q_d$ depends on $\rho_d$, the implied prior on explained variance depends on the length scales, and so does the share of each component in the total signal variance.

This motivates replacing \eqref{eq:intro-model} with a normalised prior,
\begin{equation}
  f_d = \frac{\alpha_d}{\sqrt{Q_d(\rho_d)}}\, g_d .
  \label{eq:intro-scaled}
\end{equation}
Given $\rho_d$, this is a deterministic rescaling that leaves $f_d$ Gaussian and its correlation structure unchanged. By \eqref{eq:intro-Q}, it makes the signal variance of $f_d$ equal to $\alpha_d^2$ for every length scale, while the marginal variance becomes $\alpha_d^2/Q_d(\rho_d)$. Consequently, $\Rmarg$ and $\Rreal$ agree, and, as in the linear case, we can place a prior on a global $\Rsq$ and partition it across components with a Dirichlet distribution,
\begin{equation}
  \begin{gathered}
  \Rsq \sim \mathrm{Beta}\bigl(\mu_R \phi_R,\ (1-\mu_R)\phi_R\bigr),
  \qquad
  \alpha^2 = \sigma^2\frac{\Rsq}{1-\Rsq},
  \\
  \alpha_d^2 = \psi_d\, \alpha^2,
  \qquad
  \psi \sim \mathrm{Dirichlet}(a \one_D),
  \end{gathered}
  \label{eq:r2d2}
\end{equation}
where $\mu_R \in (0,1)$ and $\phi_R > 0$ parameterise the Beta prior through its mean and precision \citep{aguilar2023r2d2m2}. We refer to \eqref{eq:intro-scaled} together with \eqref{eq:r2d2} as the R2-GP prior. Its total signal variance is $\alpha^2$, and the Dirichlet weight $\psi_d$ is the share of component $d$, for any length scale value. Since the proportions in a Dirichlet distribution have strictly negative pairwise correlation, the GAM components intrinsically compete for their share of the total explained variance.

\subsection{The Hilbert Space Representation}
\label{sec:hsgp}
The Hilbert space (HS) approximation methods is based on the eigenvalue decomposition of the Gram matrix for a chosen kernel using its spectral density along with eigenvalues and eigenfunctions based on the GP inputs.
Following \citet{solin2020hilbert}, we use a compact cover $[-L_d, L_d]$ containing the observed values of covariate $d$. Based on the Dirichlet boundary conditions, we get closed form solutions for eigenvalues $\omega$ and eigenfunctions $\phi$ of the kernel such that
\begin{equation}
  \begin{gathered}
  \phi_{dm}(x) = \frac{1}{\sqrt{L_d}}\sin\bigl(\omega_{dm}(x + L_d)\bigr),
  \\
  \omega_{dm} = \frac{m\pi}{2L_d},
  \qquad m = 1,\dots,M,
  \end{gathered}
  \label{eq:basis}
\end{equation}
where $M$ is the number of basis functions. 
The matrix $\Phi_d \in \R^{N \times M}$ contains the eigenfunction values at the observed inputs, such that $[\Phi_d]_{im} = \phi_{dm}(x_{id})$, with $\phi_{dm} \in \R^N$ for its $m$-th column. We obtain the spectral density $S_{dm} = S(\omega_{dm}; \rho_d) > 0$ of our chosen kernel evaluated at the $m$-th frequency, collected in $S_d = (S_{d1},\dots,S_{dM})\T$. For example, the spectral density of the SE kernel $k(r;\rho) = \exp\{-r^2/(2\rho^2)\}$ is of the form $S(\omega;\rho) = \sqrt{2\pi}\,\rho\exp(-\rho^2\omega^2/2)$ obtained by solving the Fourier integral with respect to the inputs. The spectral forms of other Mat{\'e}rn kernels are provided in \citet{riutortmayol2023practical}.

\begin{lemma}[Finite-design covariance]
\label{lem:g}
Let $z_d \sim N(0, I_M)$ and $g_d = \Phi_d\diag(\sqrt{S_d})z_d \in \R^N$. Then,
conditionally on $X$ and $\rho_d$,
\begin{equation}
  g_d \sim N\bigl(0, \Kt_d\bigr),
  \qquad
  \Kt_d = \Phi_d \diag(S_d)\Phi_d\T,
  \label{eq:Ktilde}
\end{equation}
and $\Kt_d$ is symmetric positive semi-definite with $\rank \Kt_d \le \min(N,M)$.
\end{lemma}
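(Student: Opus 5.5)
The plan is to use the fact that a fixed linear map applied to a standard Gaussian vector gives a Gaussian vector. Conditionally on $X$ and $\rho_d$, the matrix $A_d := \Phi_d\diag(\sqrt{S_d}) \in \R^{N\times M}$ is deterministic. Its entries $\phi_{dm}(x_{id})$ are fixed once $X$ is given. The entries $\sqrt{S_{dm}}$ are well defined because $S_{dm}>0$, and they are fixed once $\rho_d$ is given. Since $z_d \sim N(0,I_M)$ is independent of $(X,\rho_d)$, the vector $g_d = A_d z_d$ is multivariate Gaussian. I would justify this either through the characteristic function, $\E[\exp(i t\T A_d z_d)] = \exp(-\tfrac12 t\T A_d A_d\T t)$, or by citing the standard affine-transformation property. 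The mean is $A_d\E[z_d]=0$, and the covariance is $A_d \E[z_dz_d\T] A_d\T = A_d A_d\T$.

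Next I would identify $A_dA_d\T$ with $\Kt_d$. Since $\diag(\sqrt{S_d})\diag(\sqrt{S_d})\T = \diag(S_d)$, we get $A_dA_d\T = \Phi_d\diag(S_d)\Phi_d\T = \Kt_d$, which gives \eqref{eq:Ktilde}.

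For the structural claims, the Gram form $\Kt_d = A_dA_d\T$ does all the work:
\begin{itemize}
\item Symmetry is immediate, since $(A_dA_d\T)\T = A_dA_d\T$.
\item Positive semi-definiteness follows because for any $v\in\R^N$ we have $v\T\Kt_d v = \|A_d\T v\|_2^2 \ge 0$.
\item For the rank, $\rank(A_dA_d\T) = \rank(A_d) \le \min(N,M)$, because $A_d$ is $N\times M$. The cruder bound $\rank(A_dA_d\T)\le\rank(A_d)$ from submultiplicativity of rank is already enough.
\end{itemize}
When $M<N$, the distribution is a degenerate Gaussian supported on the column space of $\Phi_d$. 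I would remark that $N(0,\Kt_d)$ is understood in the general sense of a possibly singular covariance, which the characteristic-function argument handles without needing a density.

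No step is a real obstacle. The only points needing care are stating conditionality explicitly and allowing a singular covariance, so that the case $M<N$ is not excluded.
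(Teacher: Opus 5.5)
Your proposal is correct and follows essentially the same route as the paper: write $g_d = A_d z_d$ with the fixed matrix $A_d = \Phi_d\diag(\sqrt{S_d})$, take the linear image of a Gaussian to get covariance $A_dA_d\T = \Kt_d$, and read off symmetry, positive semi-definiteness and the rank bound from this Gram form. Your added remark that $N(0,\Kt_d)$ must be read as a possibly singular Gaussian is a sensible clarification that the paper leaves implicit.
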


The matrix $\Kt_d$ is the finite-basis approximation to the exact kernel matrix $K_d$ with $[K_d]_{ij} = k(x_{id}-x_{jd}; \rho_d)$. Under a sufficiently large boundary and increasing $M$ it converges to $K_d$
We use this convergence as established and refer to \citet{solin2020hilbert} for the underlying expansion. For practical guidance on choosing $L_d$ and $M$, we follow \citet{riutortmayol2023practical}. Since $k(0) = 1$, the diagonal of $\Kt_d$ is approximately one, which is the marginal normalisation that Section~\ref{sec:intro-gap} identified as insufficient for modelling the signal variance of the function.

\subsection{The Finite-sample Normaliser}
\label{sec:normalisation}

In HS form, we define the normaliser of component $d$ as the expected sample variance of the unit-magnitude component $g_d$ of Lemma~\ref{lem:g}, evaluated at the observed inputs,
\begin{equation}
  Q_d(\rho_d) \;:=\; \Ez\bigl[\vhat(g_d) \mid X, \rho_d\bigr],
  \label{eq:Qdef}
\end{equation}
where $\vhat(v) = (N-1)^{-1}\sum_{i=1}^{N}(v_i - \bar{v})^2$ is the sample variance of a vector $v \in \R^N$ and $\Ez$ is the expectation over the coefficients $z_d$. This is the finite-sample counterpart of \eqref{eq:intro-Q}, with the sample variance over the observed inputs in place of the variance operator $\Var_x$ and the HSGP covariance $\Kt_d$ in place of the exact kernel. Evaluating \eqref{eq:Qdef} through the $N \times N$ matrix $\Kt_d$ would cost $O(N^2M)$ operations each time $\rho_d$ changes. However, because the coefficients in $z_d$ are independent with unit variance, the cross terms between different basis functions have expectation zero, and the normaliser simplifies to the sum of the sample variances of the basis functions, weighted by the spectral density (Lemma~\ref{thm:Q} in Appendix~\ref{app:proofs-hsgp}),
\begin{equation}
  Q_d(\rho_d) = \sum_{m=1}^{M} S_{dm}\,\vhat(\phi_{dm}).
  \label{eq:Qcode}
\end{equation}

This identity allows the R2-GP prior to scale well with $N$. The sample variances $\vhat(\phi_{dm})$ of the basis columns depend on the design alone and not on any parameter, so they are precomputed once at cost $O(NM)$. Inside the sampler, the normaliser then reduces to an inner product of length $M$ between the spectral density $S_d$ and these precomputed variances, which requires only $O(M)$ work each time the length scale $\rho_d$ changes.\footnote{The normaliser is strictly positive whenever $M \ge 2$ and the observed inputs are not all equal (Lemma~\ref{thm:Q}), so that the division by $\sqrt{Q_d(\rho_d)}$ in Section~\ref{sec:normalised-prior} is well defined.}

Expanding the sample variance in \eqref{eq:Qdef} gives the HSGP counterpart of the decomposition \eqref{eq:intro-decomp},
\begin{equation}
  Q_d(\rho_d)
  = \frac{1}{N-1}\sum_{i=1}^{N}[\Kt_d]_{ii}
    \;-\; \frac{N}{N-1}\Var_z(\bar{g}_d),
  \label{eq:deficit}
\end{equation}
where $\bar{g}_d$ is the sample mean of the realisation $g_d$. The first term aggregates the marginal variances, each approximately one, and the second is the prior variance of the realisation's level, which the intercept absorbs. Together they equal the average marginal variance minus the average covariance between two distinct observed inputs, the sample analogue of $1 - \bar{k}_d$ in \eqref{eq:intro-Q}. %

\subsection{The Normalised Additive Prior in HS Form}
\label{sec:normalised-prior}

We construct the normalised HSGPs, using $\alpha_d^2$ from \eqref{eq:r2d2} as follows
\begin{equation}
  \begin{gathered}
  f_d
  = \frac{\alpha_d}{\sqrt{Q_d(\rho_d)}}\, g_d
  = \Phi_d\bigl(c_d \odot z_d\bigr),
  \\
  c_d = \alpha_d\sqrt{\frac{S_d}{Q_d(\rho_d)}},
  \qquad
  f = \mu_0 + \sum_{d=1}^{D} f_d ,
  \end{gathered}
  \label{eq:fd}
\end{equation}
where $\odot$ is the elementwise product. The square root and division are
elementwise. Together with \eqref{eq:r2d2}, \eqref{eq:fd} is the HSGP
form of the R2-GP prior.

As for the exact GP in Section~\ref{sec:intro-gap}, the signal variance of each component is then its allocated variance,
\begin{equation}
  \Ez\bigl[\vhat(f_d) \mid X, \rho_d, \alpha_d\bigr] = \alpha_d^2 ,
  \qquad d = 1,\dots,D,
  \label{eq:calib-comp}
\end{equation}
and, for independent $z_1,\dots,z_D$, the signal variances add up to the total budget,
\begin{equation}
  \Ez\Bigl[\vhat\Bigl(\sum_{d=1}^{D} f_d\Bigr) \Big| X, \rho, \alpha\Bigr]
  = \sum_{d=1}^{D}\alpha_d^2
  = \alpha^2 .
  \label{eq:calib-total}
\end{equation}
Three comments are in order that highlight the usefulness of the $\mathrm{R2}$ prior for GPs.
Firstly, by writing $f_d = \Phi_d\beta_d^{\mathrm{eff}}$ with $\beta_d^{\mathrm{eff}} = c_d \odot z_d$, the proposed prior is equivalent to independent centred priors on the basis weights
\begin{equation}
  \beta^{\mathrm{eff}}_{dm} \sim
  N\Bigl(0,\ \alpha_d^2 S_{dm} / Q_d(\rho_d)\Bigr),
  \qquad m = 1,\dots,M .
  \label{eq:coefprior}
\end{equation}
Computationally, the $z_d$ remains standard normal and the scale is applied through a deterministic transformation, and thus, equivalent to the non-centred parameterisation \citep{papaspiliopoulos2007general} which is well known to keep the posterior geometry well conditioned when a signal variance is small. %

Second, because the deterministic rescaling in \eqref{eq:intro-scaled} applies a single global scalar $1/\sqrt{Q_d(\rho_d)}$ to all basis functions of component $d$ equally, it preserves the relative weighting of the spectral densities $S_{dm}$ that govern the smoothness of the function. Furthermore, because this scalar multiplier is independent of the basis index $m$, it does not interact with the truncation of the series. Consequently, the normalised HSGP inherits the same convergence guarantees as the standard HSGP: as the number of basis functions $M$ grows, the approximate prior converges to the exact normalised GP prior.

\begin{figure*}[t]
  \centering
  \resizebox{0.9\textwidth}{!}{\input{figures/tikz/fig02_prior_predictive}
    }
  \caption{Implied Prior on $\Remp$. Empirical CDF of $\Remp = \vhat(f)/\bigl(\vhat(f) + \sigma^2\bigr)$ for additive HSGP GAMs under four priors on $\alpha_d$, as the number of components $D$ grows. CDFs are generated from $2000$ prior draws: $N = 200$, $M = 100$, $\sigma = 1$ and $\rho_d \sim \mathrm{InvGamma}(5, 4)$. These figures are insensitive to the prior $\rho_d$. RHS and normal-gamma are specified as in Section~\ref{sec:simulation}, with $p_0 = 5$. The dashed curve is the target $\mathrm{Beta}(1,2)$. Only R2-GP is stable in $D$. %
  }
  \label{fig:prior-density}
\end{figure*}

Third,\label{sec:prior-predictive} the prior on the total signal variance does not change as components are added. By \eqref{eq:calib-total} and \eqref{eq:r2d2}, it equals $\sigma^2\Rsq/(1-\Rsq)$ for every $D$, so that its prior is the one placed on $\Rsq$. If instead each $\alpha_d$ has a prior second moment $s^2$ that is free of $D$, as under independent priors, it has prior mean $Ds^2$. Figure~\ref{fig:prior-density} shows the consequence for prior draws for the empirical proportion of explained variance, $\Remp = \vhat(f)/\bigl(\vhat(f) + \sigma^2\bigr)$ under different priors. Under the R2-GP prior with $\Rsq \sim \mathrm{Beta}(1,2)$ and $a = 1$ (panel R2-GP), its implied prior stays close to the target for every $D$ between $1$ and $100$. Under independent half-normal priors with unit scale, the median rises from $0.26$ at $D = 1$ to $0.99$ at $D = 100$. As competing shrinkage priors, which are further introduced in Section~\ref{sec:simulation}, the regularised horseshoe \citep{piironen2017sparsity} and normal-gamma \citep{griffin2010normalgamma} may shrink each $\alpha_d$ but leave the prior on their total to shrink or grow with $D$.

Notice that calibration holds for $\vhat(f)$ but not for the empirical proportion of explained variance, which is a nonlinear function of it. The map $t \mapsto t/(t+\sigma^2)$ is strictly concave on $t > 0$, so Jensen's inequality gives
\begin{equation}
  \E\bigl[\Remp \mid X, \sigma, \Rsq\bigr]
  \;\le\; \frac{\alpha^2}{\alpha^2+\sigma^2} \;=\; \Rsq ,
  \label{eq:jensen}
\end{equation}
with strict inequality whenever $\vhat(f)$ is non-degenerate. This accounts for the slight shift to the left for small $D$ in Figure~\ref{fig:prior-density} for the R2-GP prior. This is not a deficiency of the proposed prior, since no choice of $Q_d$ calibrates both $\E[\vhat(f)]$ and $\E[h(\vhat(f))]$ for a nonlinear $h$.

Finally, to ensure the component's prior expected marginal variance remains finite, the prior on the length scale $\rho_d$ must be sufficiently light-tailed. As $\rho_d \to \infty$, the normaliser for the exact kernel decays such that $1/Q_d(\rho_d) \asymp \rho_d^{\gamma}$, where $\gamma = 1$ for the exponential kernel and $\gamma = 2$ for smoother Mat\'ern variants \citep{rasmussen2006gaussian}  . Therefore, the exact marginal variance $\alpha_d^2/Q_d(\rho_d)$ has a finite prior expectation if and only if $\E[\rho_d^{\gamma}] < \infty$. While the HSGP approximation inherently bounds this variance for finite $M$, preserving the properties of the exact-kernel limit rules out heavy-tailed priors, such as an $\mathrm{Inverse\text{-}Gamma}(0.1, 0.1)$ distribution.

\section{Simulation study}
\label{sec:simulation}

We assess finite-sample predictive and parameter recovery performance of the R2-GP prior, that is, \eqref{eq:r2d2} together with \eqref{eq:fd}, as the number of GAM components $D$ increases. The competitors are priors on the component scales that carry popular shrinkage approaches over to \eqref{eq:gp-gams}. These have not yet been considered for HSGP GAMs to our knowledge, but pose straightforward extensions based off our methodology.

\begin{figure*}[t]
  \centering
  \resizebox{1\textwidth}{!}{\input{figures/tikz/elpd_r2_paper_dense_se_rho0p3}
    }
  \caption{Dense DGP, SE Kernel, $\rho_0 = 0.3$. Left: mean $\Delta\mathrm{elpd}$ over $50$ simulations (solid) with $\pm 1$ Monte Carlo standard error (dotted). Dashed lines mark $\pm 4$ and negative values favour R2-GP. Right: posterior means (points) and central $90\%$ intervals (bars) of $\Remp$ for each simulation. Crosses mark the realised $\Rsq_{\mathrm{true}}$.}
  \label{fig:sim-r2}
\end{figure*}

\subsection{Setup}
\label{sec:sim-setup}

\paragraph{Data Generation.}
We generate data from \eqref{eq:gp-gams} with $\mu_0 = 0$ and $\sigma = 1$. Each $f_d$ is a draw from the normalised prior \eqref{eq:fd} with a common length scale $\rho_d = \rho_0$ and signal variance
\begin{equation}
  \alpha_d^2 = \frac{\sigma^2}{D}\,\frac{\Rsq_0}{1-\Rsq_0},
  \qquad \Rsq_0 = \frac{1}{3},
  \label{eq:sim-alpha}
\end{equation}
for $d = 1,\dots,D$. By \eqref{eq:calib-total}, the total signal variance is $\sigma^2\Rsq_0/(1-\Rsq_0) = 1/2$ for every $D$, while the signal variance of a single component decays at rate $D^{-1}$. We call \eqref{eq:sim-alpha} the dense data-generating process (DGP). We also consider a sparse DGP in Appendix~\ref{app:sim-sparse}, in which the first five components share the same total signal variance and $\alpha_d = 0$ for $d > 5$. We vary $D \in \{5,10,30,50,100\}$, $\rho_0 \in \{0.3,0.5,0.7\}$ and the kernel choice that includes SE, Mat\'ern $3/2$ and Mat\'ern $5/2$. For each condition, we simulate $50$ replications. A data set has $N = 120$ training and $500$ test observations. Inputs are iid uniform on $[-1,1]$. Following \citet{riutortmayol2023practical}, we set $L_d = \max(1.5,\,4.5\rho_0)$ and $M = \lceil 1.2\,\kappa L_d/\rho_0\rceil$ with $\kappa = 1.75$, $3.42$ and $2.65$ for the three kernels, so that $10 \le M \le 21$. The data are generated with $4M$ basis functions, so that the $f$ is generated close as a close representation to the desired GP. Since \eqref{eq:calib-total} holds in expectation over $z$, the realised value $\Rsq_{\mathrm{true}}$, that is, $\Remp$ at the generating $f$ and $\sigma$, varies across data sets around the nominal $\Rsq_0$. %

\paragraph{Priors.}
All models use the normalised components \eqref{eq:fd} and differ only in the prior on $\alpha_1,\dots,\alpha_D$. R2-GP is \eqref{eq:r2d2} with $\mu_R = 1/3$, $\phi_R = 3$ and $a = 1$, so that $\Rsq \sim \mathrm{Beta}(1,2)$, as in Section~\ref{sec:prior-predictive}. We compare it with three alternatives, where a superscript $+$ denotes truncation to $(0,\infty)$.
(i) The \emph{normal} prior, $\alpha_d \sim N^{+}(0,1)$ independently, is the analogue of ridge regression and represents independent priors on the component scales.
(ii) The regularised horseshoe \citep[RHS;][]{piironen2017sparsity}, $\alpha_d \mid \lambda_d,\tau,\zeta \sim N^{+}(0,\tau^2\tilde\lambda_d^2)$ with $\tilde\lambda_d^2 = \zeta^2\lambda_d^2/(\zeta^2+\tau^2\lambda_d^2)$, $\lambda_d \sim t^{+}_{3}(0,1)$, $\tau \sim C^{+}(0,\tau_0)$ and $\zeta^2 \sim \mathrm{InvGamma}(2,8)$, represents sparsity-favouring priors. We set $\tau_0 = p_0(D-p_0)^{-1}\sigma N^{-1/2}$ as in \citet{piironen2017sparsity}, with a prior guess of $p_0 = \min(5,D-1)$ active components. While $p_0 = 5$ is miscalibrated for the dense DGP, $\tau$ is nevertheless learned from the data, and $p_0$ is well calibrated for the sparse DGP.
(iii) The normal-gamma prior \citep[NG;][]{griffin2010normalgamma}, $\alpha_d \mid \xi_d \sim N^{+}(0,\sigma^2\xi_d)$ with $\xi_d \sim \mathrm{Gamma}(\vartheta,\vartheta/\eta)$ in shape-rate form, $\vartheta \sim \mathrm{Exp}(1)$ and $\eta \sim \mathrm{InvGamma}(2,1)$, represents global-local shrinkage priors that are popular in other domains.
The remaining priors are common to all models, namely $\rho_d \sim N^{+}(\rho_0,0.1^2)$, $\sigma \sim N^{+}(0,1)$ and $\mu_0 \sim N(0,2^2)$. %
The comparisons presented here isolate the effect of the prior on $\alpha_d$. We show in Appendix~\ref{app:sim-unnorm} that the qualitative behaviour of the competing priors is the same also without the normalisation.

\paragraph{Expected Behaviour.}
The total signal variance is independent of $D$ under R2-GP, whereas its mean is $D$ under normal, and $D\sigma^2$ under NG, respectively. Since the generating value is $1/2$ throughout, expect  normal and NG to overestimate the signal variance and to underestimate $\sigma$ by a margin that grows with $D$, in line with Figure~\ref{fig:prior-density}. RHS is a sparsity favouring prior which for $p_0 = 5$ has $\tau_0 = O(D^{-1})$, hence, decays faster than the generating $\alpha_d = O(D^{-1/2})$ which is expected to lead to excessive shrinkage for large $D$.

\paragraph{Evaluation.}
Predictive performance is measured by the log predictive density summed over the test observations, which estimates the expected log predictive density \citep[elpd;][]{vehtari2017practical}, relative to R2-GP,
\begin{equation}
  \Delta\mathrm{elpd}_m^{(s)}
  = \mathrm{elpd}_m^{(s)} - \mathrm{elpd}_{\text{R2-GP}}^{(s)},
  \label{eq:sim-delpd}
\end{equation}
for prior $m$ and replication $s$, so that negative values favour R2-GP. We report its mean over replications and, following \citet{sivula2025uncertainty} and \citet{mclatchie2025advances}, regard $|\Delta\mathrm{elpd}| > 4$ as a meaningful difference. For $\theta \in \{\alpha,\rho,\sigma\}$ we report the coverage of the central $90\%$ posterior intervals (Tables~\ref{tab:coverage-dense-se}, \ref{tab:coverage-dense-m32} and \ref{tab:coverage-dense-m52}) and the relative root-mean-squared error, that is relative to the RMSE of the R2-GP (RMSE; Tables~\ref{tab:rmse-dense-se}, \ref{tab:rmse-dense-m32} and \ref{tab:rmse-dense-m52})

Values above one indicate worse recovery than under R2-GP. Lastly, we compare the posterior of $\Remp$ with $\Rsq_{\mathrm{true}}$. All models are implemented in Stan \citep{carpenter2017stan} and fitted with four chains of $2000$ iterations, with the half as warm-up at a target acceptance rate of $0.95$. Figure~\ref{fig:sim-r2} shows the predictive performance and the recovery of $\Rsq_{\mathrm{true}}$ for the SE kernel with $\rho_0 = 0.3$. We show figures for all other conditions in Appendix~\ref{app:simulation}. See  Tables~\ref{tab:sampling-dense-se}, \ref{tab:sampling-dense-m32} and \ref{tab:sampling-dense-m52} for sampling diagnostics.

\subsection{Results}
\label{sec:sim-results}

\paragraph{Predictive Performance.}
R2-GP attains the highest mean elpd in each of the $45$ conditions, against every competitor. At $D = 5$ the priors are close, however normal and NG deteriorate as $D$ grows. This is as a consequence of the lack of joint shrinkage of the GAM components, which leads to overestimation of $\Rsq_{\mathrm{true}}$. RHS stays closest to R2-GP at higher $D$, with mean $\Delta\mathrm{elpd} \in [-13.5, -4.0]$ for all $D \ge 10$. As expected, with $p_0 = 5$ RHS overshrinks as $D$ increases, which can be seen as underestimation of $\Rsq_{\mathrm{true}}$ in Figure~\ref{fig:sim-r2}. Due to the small true variance shares as $D$ increases, this results in less of a penalty in predictive performance than the overestimation of $\Rsq_{\mathrm{true}}$ seen for normal and NG.

\paragraph{Recovery.}
Recovery mirrors the predictive results for the dense DGP. R2-GP has the lowest or close to the lowest RMSE for $(\alpha,\sigma)$ in all conditions. Interval coverage is close to or above the nominal level, with its advantage widening with $D$. (Tables~\ref{tab:coverage-dense-se} and \ref{tab:rmse-dense-se} for the SE kernel, Appendix~\ref{app:simulation} for the Mat\'ern kernels).%

\paragraph{Sampling Efficiency.}
R2-GP also improves the sampling efficiency markedly. On average over replications, its largest $\widehat{R}$ is at most $1.01$ and its smallest bulk effective sample size is at least $448$ in every condition. It has the fewest divergent transitions in all conditions and the shortest sampling time in all but one. At $D = 100$, by contrast, the normal prior and NG fail to converge. The results of the normal and NG priors at large $D$ therefore reflect the prior and the sampler jointly. %
The RHS prior converges, with $\widehat{R} \le 1.03$, but needs up to $1.6$ times the sampling time.

\paragraph{Sparse Allocation.}
Under the sparse DGP (Appendix~\ref{app:sim-sparse}), RHS with the calibration $p_0 = 5$ attains a higher mean elpd than R2-GP for $D \ge 30$, however by small margins. The normal prior underperforms as in the dense DGP, and NG does so at $D = 100$. %
Relative to RHS, R2-GP's RMSE is slightly lower for the non-zero $\alpha_d$ in all conditions, however higher for the true zero $\alpha_d$. This is as expected, since for $a = 1$ the allocation $\psi$ is uniform on the simplex, whereas concentration near zero requires a small $a$ \citep{zhang2022r2d2,aguilar2023r2d2m2,kohns2025arr2}.

\paragraph{Summary.}
For dense signals R2-GP performs at least as well as every competing prior in terms of prediction and recovers component scales most accurately. In particular, our method recovers the component scales most accurately in all conditions, with a margin over the normal and NG priors that grows with $D$. Sampling wise, our prior is the most reliable prior to sample. The patterns are almost identical across the three kernels and the three length scales.

\section{Real-world Case Study} \label{sec:case_study}

\paragraph{Data.}
In addition to our simulation scenarios, we demonstrate our proposed methods in analysing the BloodBrain data \citep{mente2005recursive} from the R package \textit{caret} \citep{kuhn2008caret}. The dataset describes $208$ chemical compounds using $134$ molecular descriptors based on the compounds structure. The response $y$ is the logarithm of the ratio of the compound's concentration in the brain to the blood, a standard procedure to measure how readily a chemical compound crosses the blood-brain barrier (with \textit{logBBB} as response). For simplicity, we consider only continuous covariates. The data has $N=208$ samples and $D = 101$ covariates. 

\paragraph{Model Fit.}
We estimate the same set of models as in Section~\ref{sec:simulation}, except for the RHS prior for which $p_0 = 10$. 
We consider the SE and  Mat{\'e}rn 3/2 kernels.
We set boundary conditions with $c=2$ and compute the number of basis functions accordingly.
To compare predictive performance, the data is split into $80/20$ train/test ratio with 10 random replications. %
Each fit uses 4 chains with 1000 warm-up and 1000 sample draws. %

\begin{figure}[t]
  \centering
  {\input{figures/tikz/effects_top2}
    }
  \caption{Covariate Effects, BloodBrain Data. Top 2 covariates are shown based on their contribution to explained variance.}
  \label{fig:case-study-2effects}
\end{figure}

\paragraph{Results.}
The covariate \textit{clogp} describing lipophilicity is a biologically relevant covariate for modelling \textit{logBBB} \citep{mente2005recursive} in the BloodBrain data. Additionally, we also investigate here \textit{lumo} describing the energy component. 
The effects of these two covariates on response \textit{logBBB} for all fitted models are shown in Figure \ref{fig:case-study-2effects}. Across both kernels and all models, \textit{logBBB} increases with an increase in \textit{clogp}, indicating lipophilic compounds cross the blood-brain-barrier readily. For \textit{lumo}, we see a decreasing effect on \textit{logBBB}. The shrinkage priors agree while maintaining tight posterior credible intervals while the Normal model remains unstable. %
See Appendix~\ref{app:add-case-study} for the effect of more covariates.  %
Our proposed R2-GP prior provides superior predictive performance as well as sampling speed and diagnostics (Appendix Table~\ref{tab:casestudy-sample-diag}) compared to the Normal model. RHS and NG provide comparable predictive performance R2-GPs, but worse MCMC convergence, the latter also reflected in the simulation study. Overall, using R2-GP is computationally and inferentially more robust for this data while providing the benefit of a prior based off of first principles.

\section{Discussion}
We introduce the R2-GP prior for GAM GPs under which the magnitude of a component equals its signal variance, the expected variation across the inputs. We model magnitudes based off of an interpretable prior on the model's proportion of explained variance and a Dirichlet partition of that variance across components. Thus, components compete for explained variance, preventing variance explosion with many components. In HS form, the normaliser costs $O(M)$ per gradient evaluation, so the approach also scales well for large sample sizes. Our simulations and real-data case study demonstrate superior predictive performance of R2-GP over independent normal priors. Although competitive in predictive performance with other shrinkage priors, R2-GP outperforms in terms of sampling efficiency, especially when many components are active. Future works should include richer kernel structures, such as interactions between covariates and as well as applications in other GP extensions.

\section*{AI Use Statement}

In this work, we used generative AI tools for code prototyping of parts of the code (Claude Opus 5 High from Anthropic). Further, we used the ChatGPT 5.6 model from OpenAI for minor language edits, aiming to enhance readability. LLM-generated code and text was verified, modified and tested for correctness by the authors. We take responsibility for the final content of this work, including text, claims, or artifacts produced with the aid of generative AI.

\bibliography{refs}

\clearpage
\appendix
\supplementsizes

\begin{center}
  {\Large\bfseries Supplementary Material\par}
\end{center}
\vspace{1em}

\paragraph{Overview.}
This supplement has four parts. Appendix~\ref{app:proofs} gives the derivations and proofs for Section~\ref{sec:r2hsgp}. Appendix~\ref{app:simulation} gives the full results of the simulation study of Section~\ref{sec:simulation}. Appendix~\ref{app:add-case-study} gives further results for the case study of Section~\ref{sec:case_study}. Appendix~\ref{app:compute} describes the computing infrastructure. Table~\ref{tab:app-overview} lists the content of each part and the section of the main text that it supports.

\begin{table}[!htbp]
\apptable
\caption{Overview of the Supplementary Material}\label{tab:app-overview}
\begin{tabular}{@{}l p{0.56\textwidth} l l l@{}}
\toprule
Appendix & Content & Figures & Tables & Section \\
\midrule
\ref{app:calibration} & Variance decomposition \eqref{eq:intro-decomp} and normaliser \eqref{eq:intro-Q} for the exact GP & & & \ref{sec:intro-gap} \\
\ref{app:proofs-hsgp} & Proofs of Lemmas~\ref{lem:g} and \ref{thm:Q}, of the decomposition \eqref{eq:deficit} and of the calibration results \eqref{eq:calib-comp} and \eqref{eq:calib-total} & & & \ref{sec:hsgp}--\ref{sec:normalised-prior} \\
\addlinespace
\ref{app:sim-dense} & Dense DGP: $\Delta\mathrm{elpd}$ and posterior of $\Remp$ (figures), interval coverage, relative RMSE and sampling diagnostics (tables), for all kernels and length scales & \ref{fig:dense-elpd-se}--\ref{fig:dense-r2-m52} & \ref{tab:coverage-dense-se}--\ref{tab:sampling-dense-m52} & \ref{sec:sim-results} \\
\ref{app:sim-sparse} & Sparse DGP: the same figures and tables & \ref{fig:sparse-elpd-se}--\ref{fig:sparse-r2-m52} & \ref{tab:coverage-sparse-se}--\ref{tab:sampling-sparse-m52} & \ref{sec:sim-results} \\
\ref{app:sim-unnorm} & Competing priors without the normalisation: $\Delta\mathrm{elpd}$ for both DGPs & \ref{fig:unnorm-elpd-dense-se}--\ref{fig:unnorm-elpd-sparse-m52} & & \ref{sec:sim-setup} \\
\addlinespace
\ref{app:add-case-study} & BloodBrain data: effects of the $12$ covariates with the largest share of explained variance, predictive performance and sampling diagnostics & \ref{fig:case-study-12effects-se}--\ref{fig:case-study-12effects-m32} & \ref{tab:casestudy-sample-diag} & \ref{sec:case_study} \\
\addlinespace
\ref{app:compute} & Hardware, software versions and run times & & & \ref{sec:simulation}, \ref{sec:case_study} \\
\bottomrule
\end{tabular}
\end{table}

\section{Proofs}
\label{app:proofs}

\subsection{Derivations for Section~\ref{sec:intro-gap}}
\label{app:calibration}

Throughout, $\E_x$ and $\Var_x$ are the expectation and variance operators with respect to random variable $x$, and $x$ and $x'$ are independent inputs.

\paragraph{Decomposition \eqref{eq:intro-decomp}.}
Apply the law of total variance to $f_d(x)$ under the joint distribution of $x$ and $f_d$. Conditioning on $x$ gives
\[
  \Var\bigl(f_d(x)\bigr)
  = \E_x\bigl[\Var_{f_d}(f_d(x))\bigr] + \Var_x\bigl(\E_{f_d}[f_d(x)]\bigr)
  = \alpha_d^2 ,
\]
since under \eqref{eq:intro-model} the prior mean is zero and $\Var_{f_d}(f_d(x)) = \alpha_d^2\,k(x,x;\rho_d) = \alpha_d^2$ at every input. Conditioning on $f_d$ instead gives $\Var(f_d(x)) = \E_{f_d}[\Var_x(f_d(x)\mid f_d)] + \Var_{f_d}(\E_x[f_d(x)\mid f_d])$, and the two expressions together are \eqref{eq:intro-decomp}.

\paragraph{Linear Term.}
The same two steps hold for any zero-mean prior on a function $f$ of $x$, with $\E_x[\Var_f(f(x))]$ on the left-hand side. For $f(x) = x\beta$ with centred inputs, $\E_x[x] = 0$, the level is $\E_x[x]\,\beta = 0$ for every $\beta$. Its variance is therefore zero, and the marginal variance, averaged over the inputs, equals the signal variance, $\Var(\beta)\,\E_x[x^2]$.

\paragraph{Variance of the Level and $Q_d$.}
Under \eqref{eq:intro-model}, the level has mean zero, and $x$, $x'$ and $f_d$ are independent, so that
\[
  \Var_{f_d}\bigl(\E_x[f_d(x)\mid f_d]\bigr)
  = \E_{f_d}\bigl[\E_x[f_d(x)\mid f_d]\;\E_{x'}[f_d(x')\mid f_d]\bigr]
  = \E_{x,x'}\bigl[\E_{f_d}[f_d(x)f_d(x')]\bigr]
  = \alpha_d^2\,\E_{x,x'}\bigl[k(x,x';\rho_d)\bigr] ,
\]
which is $\alpha_d^2\,\bar{k}_d$ and, with \eqref{eq:intro-decomp}, gives \eqref{eq:intro-Q}. A Mat\'ern kernel has the form $k(x,x';\rho_d) = \kappa(|x-x'|/\rho_d)$, where $\kappa$ is positive and strictly decreasing with $\kappa(0) = 1$. Hence $0 < k(x,x';\rho_d) < 1$ for $x \neq x'$, and $k(x,x';\rho_d)$ increases with $\rho_d$. Unless all inputs coincide, it follows that $\bar{k}_d \in (0,1)$ increases with $\rho_d$, so that $Q_d(\rho_d) \in (0,1)$ decreases with $\rho_d$. For a general stationary kernel, $\alpha_d^2\bar{k}_d$ is still a variance that is at most $\alpha_d^2$, so that $Q_d \in [0,1]$ without any assumption on the sign of the correlations.

\paragraph{Finite-sample Counterpart.}
Section~\ref{sec:normalisation} replaces $\Var_x$ by the sample variance $\vhat$ over the observed inputs $x_{1d},\dots,x_{Nd}$. The proof of \eqref{eq:Qsplit} holds for any covariance matrix. For the exact kernel matrix $K_d$ in place of $\Kt_d$, with $[K_d]_{ii} = 1$, it gives
\[
  \E_{f_d}\bigl[\vhat(f_d)\bigr]
  = \alpha_d^2\Bigl\{1 - \frac{1}{N(N-1)}\sum_{i \neq j}k(x_{id},x_{jd};\rho_d)\Bigr\}
\]
under \eqref{eq:intro-model}, so that the expectation over two independent inputs in $\bar{k}_d$ becomes the average over pairs of distinct observed inputs.

\subsection{Proofs for Sections~\ref{sec:hsgp} to \ref{sec:normalised-prior}}
\label{app:proofs-hsgp}

Throughout, $\E_z$ denotes expectation over the coefficient vectors $z_1,\dots,z_D$ only, holding $X$, $\rho$ and $\alpha$ fixed. Two elementary identities are used repeatedly and are stated here once. Let $P_N = I_N - N^{-1}\one_N\one_N\T$ be the centring matrix, which is symmetric and idempotent with $P_N\one_N = 0$ and $\tr P_N = N-1$. Subtracting the sample mean is an orthogonal projection, so for any $v \in \R^N$
\begin{equation}
  \vhat(v) = \frac{1}{N-1}\lVert P_N v\rVert^2 = \frac{1}{N-1}\,v\T P_N v ,
  \label{eq:varquad}
\end{equation}
and for a random vector $u$ with mean zero and covariance $\Sigma$ and a fixed matrix $A$, writing the scalar $u\T Au$ as $\tr(Auu\T)$ and using linearity of the trace gives
\begin{equation}
  \E\bigl[u\T A u\bigr] = \tr(A\Sigma).
  \label{eq:quadexp}
\end{equation}

\begin{proof}[Proof of Lemma~\ref{lem:g}]
Write $g_d = A z_d$ with the fixed matrix $A = \Phi_d\diag(\sqrt{S_d})$. A linear image of a Gaussian vector is Gaussian, with $\E_z[g_d] = A\E_z[z_d] = 0$ and
\[
  \E_z[g_dg_d\T] = A\,\E_z[z_dz_d\T]\,A\T = AA\T
  = \Phi_d\diag(\sqrt{S_d})\diag(\sqrt{S_d})\Phi_d\T
  = \Phi_d\diag(S_d)\Phi_d\T,
\]
using $\diag(\sqrt{S_d})^2 = \diag(S_d)$, valid since the matrices are diagonal and $S_{dm} > 0$. Symmetry is immediate, and $v\T\Kt_d v = \lVert\diag(\sqrt{S_d})\Phi_d\T v\rVert^2 \ge 0$ gives positive semi-definiteness. The rank bound follows from $\rank(AA\T) = \rank(A) \le \min(N,M)$.
\end{proof}

\begin{lemma}[Equivalent expressions for the normaliser]
\label{thm:Q}
Let $g_d$ and $\Kt_d$ be as in Lemma~\ref{lem:g}. Then the normaliser \eqref{eq:Qdef} satisfies
\begin{align}
  Q_d(\rho_d)
  &= \frac{1}{N-1}\tr\bigl(P_N \Kt_d\bigr)
  \label{eq:Qtrace}\\
  &= \sum_{m=1}^{M} S_{dm}\,\vhat(\phi_{dm})
  \tag{\ref{eq:Qcode}}\\
  &= \frac{1}{N-1}\Bigl\{\tr(\Kt_d) - \tfrac{1}{N}\one_N\T \Kt_d \one_N\Bigr\}.
  \label{eq:Qsplit}
\end{align}
Moreover, $Q_d(\rho_d) > 0$ whenever $M \ge 2$ and the observed inputs of covariate $d$ lie in $(-L_d, L_d)$ and are not all equal.
\end{lemma}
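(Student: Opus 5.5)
The three expressions follow from the two identities \eqref{eq:varquad} and \eqref{eq:quadexp} together with Lemma~\ref{lem:g}. First, by \eqref{eq:varquad}, $\vhat(g_d) = (N-1)^{-1} g_d\T P_N g_d$. Lemma~\ref{lem:g} gives $g_d$ mean zero and covariance $\Kt_d$, so \eqref{eq:quadexp} with $A = P_N$ yields \eqref{eq:Qtrace}.

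For \eqref{eq:Qcode}, substitute $\Kt_d = \sum_{m} S_{dm}\,\phi_{dm}\phi_{dm}\T$, writing $\phi_{dm}$ for the $m$-th column of $\Phi_d$. Linearity and cyclicity of the trace then give
\[
  \tr(P_N\Kt_d) = \sum_m S_{dm}\,\phi_{dm}\T P_N\phi_{dm}.
\]
Applying \eqref{eq:varquad} to each column turns this into $\sum_m S_{dm}(N-1)\vhat(\phi_{dm})$, which is \eqref{eq:Qcode}. For \eqref{eq:Qsplit}, expand $P_N = I_N - N^{-1}\one_N\one_N\T$ inside the trace and use $\tr(\one_N\one_N\T\Kt_d) = \one_N\T\Kt_d\one_N$. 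All of these steps are routine.

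The positivity claim is the only real step. By \eqref{eq:Qcode}, every term is nonnegative because $S_{dm}>0$ and $\vhat\ge 0$. It therefore suffices to find one $m\le M$ with $\vhat(\phi_{dm})>0$, i.e.\ with $\phi_{dm}$ non-constant on the observed inputs. Suppose not, so that both $\phi_{d1}$ and $\phi_{d2}$ are constant on the data. Write $u_i = \omega_{d1}(x_{id}+L_d) \in (0,\pi/2)$ for $x_{id}\in(-L_d,L_d)$. Then $\phi_{d1}\propto\sin u_i$, and $\sin$ is strictly increasing on $(0,\pi/2)$. Constancy of $\phi_{d1}$ would force all $u_i$, hence all $x_{id}$, to be equal, contradicting the assumption.

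This argument uses only $m=1$, so it already works for $M\ge1$. The hypothesis $M\ge 2$ is harmless, but I would double-check the frequency convention. With $\omega_{dm}=m\pi/(2L_d)$ the argument $u_i$ ranges over $(0,\pi)$ rather than $(0,\pi/2)$, and $\sin$ is not injective there. Two distinct inputs symmetric about $0$ then give equal values of $\phi_{d1}$.

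The main obstacle is handling this case properly, and it is where $M\ge2$ is used. If $\phi_{d1}$ is constant on the data, every input satisfies $\sin u_i = c$ for some $c$. So the inputs take at most two values $u$ and $\pi-u$ with $u\ne\pi/2$, and both values occur because the inputs are not all equal. Then $\phi_{d2}\propto\sin 2u_i$ takes the values $\sin 2u$ and $\sin(2\pi-2u) = -\sin 2u$. These differ unless $\sin 2u=0$, which is impossible for $u\in(0,\pi/2)$. Hence $\vhat(\phi_{d2})>0$, and therefore $Q_d(\rho_d)>0$.
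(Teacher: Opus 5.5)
Your proof is correct and follows the paper's argument essentially step for step. The three identities go the same way: the trace identity comes from \eqref{eq:quadexp}, and your outer-product expansion $\Kt_d=\sum_m S_{dm}\phi_{dm}\phi_{dm}\T$ is the same computation as the paper's $\tr(\diag(S_d)B)=\sum_m S_{dm}B_{mm}$. For positivity you use the same reduction to a two-point design symmetric about the centre of $[-L_d,L_d]$, where the second basis column takes the values $\pm L_d^{-1/2}\sin 2u$.

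Delete your first positivity paragraph. Its claims that $u_i\in(0,\pi/2)$ and that $M\ge 1$ suffices are false, as you notice yourself and as the paper's $M=1$ counterexample confirms. Only your corrected two-column argument should remain.
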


\begin{proof}
By \eqref{eq:varquad}, $\vhat(g_d) = (N-1)^{-1}g_d\T P_N g_d$. Conditionally on $X$ and $\rho_d$ the matrix $P_N$ is fixed and $g_d$ has mean zero with covariance $\Kt_d$ by Lemma~\ref{lem:g}, so \eqref{eq:quadexp} gives \eqref{eq:Qtrace}.

For \eqref{eq:Qcode}, substitute $\Kt_d = \Phi_d\diag(S_d)\Phi_d\T$ and use cyclicity together with the fact that $\tr(\diag(S)B) = \sum_m S_m B_{mm}$ for any square $B$:
\begin{align*}
  \tr\bigl(P_N\Phi_d\diag(S_d)\Phi_d\T\bigr)
  &= \tr\bigl(\diag(S_d)\,\Phi_d\T P_N\Phi_d\bigr)\\
  &= \sum_{m=1}^{M}S_{dm}\bigl[\Phi_d\T P_N \Phi_d\bigr]_{mm}
   = \sum_{m=1}^{M}S_{dm}\,\phi_{dm}\T P_N\phi_{dm}.
\end{align*}
Dividing by $N-1$ and applying \eqref{eq:varquad} to each column turns $\phi_{dm}\T P_N \phi_{dm}/(N-1)$ into $\vhat(\phi_{dm})$. The off-diagonal entries of $\Phi_d\T P_N\Phi_d$, which are the sample covariances between different basis columns, do not enter, because the coefficients in $z_d$ are uncorrelated.

For \eqref{eq:Qsplit}, expand $P_N = I_N - N^{-1}\one_N\one_N\T$ inside the trace and use $\tr(\one_N\one_N\T\Kt_d) = \one_N\T\Kt_d\one_N$.

For strict positivity, $S_{dm} > 0$ and $\vhat(\phi_{dm}) \ge 0$, so that \eqref{eq:Qcode} is zero only if every basis column is constant across the observed inputs. Let $u_i = (x_{id} + L_d)/(2L_d)$, which lies in $(0,1)$ by assumption, so that $\phi_{dm}(x_{id}) = L_d^{-1/2}\sin(m\pi u_i)$ by \eqref{eq:basis}. The first column is constant only if every $u_i$ equals $u^*$ or $1 - u^*$ for some $u^* \in (0, 1/2]$. If both values occur and $u^* < 1/2$, the second column takes the two distinct values $\pm L_d^{-1/2}\sin(2\pi u^*)$ and is not constant. Otherwise all inputs are equal. The condition $M \ge 2$ cannot be dropped: for $M = 1$, a design with two distinct input values that are symmetric about the centre of the interval gives $Q_d = 0$.
\end{proof}

\begin{proof}[Proof of the decomposition \eqref{eq:deficit}]
Write $\vhat(g_d) = (N-1)^{-1}\bigl\{\sum_{i=1}^{N}[g_d]_i^2 - N\bar{g}_d^2\bigr\}$ with $\bar{g}_d = N^{-1}\one_N\T g_d$. By Lemma~\ref{lem:g}, $[g_d]_i$ and $\bar{g}_d$ have mean zero, with $\Ez\bigl[[g_d]_i^2\bigr] = [\Kt_d]_{ii}$ and $\Ez[\bar{g}_d^2] = \Var_z(\bar{g}_d) = N^{-2}\one_N\T\Kt_d\one_N$. Taking the expectation term by term gives \eqref{eq:deficit}, in agreement with \eqref{eq:Qsplit}. Splitting $\one_N\T\Kt_d\one_N$ into its diagonal and off-diagonal sums gives the equivalent form
\[
  Q_d(\rho_d)
  = \frac{1}{N}\sum_{i=1}^{N}[\Kt_d]_{ii}
    - \frac{1}{N(N-1)}\sum_{i \neq j}[\Kt_d]_{ij},
\]
the average marginal variance minus the average covariance between two distinct observed inputs. When the diagonal of $\Kt_d$ is one, $Q_d < 1$ if and only if this average covariance is positive. Without a condition on the sign of the covariances, positive semi-definiteness of $\Kt_d$ gives only $Q_d \le \tr(\Kt_d)/(N-1)$, which is $N/(N-1)$ for a unit diagonal.
\end{proof}

\begin{proof}[Proof of the component calibration \eqref{eq:calib-comp}]
Conditionally on $X$, $\rho_d$ and $\alpha_d$, the factor $\alpha_d/\sqrt{Q_d}$ is a deterministic scalar, and $\vhat$ is homogeneous of degree two, since
$\vhat(cv) = (N-1)^{-1}(cv)\T P_N(cv) = c^2\vhat(v)$. Hence
\[
  \Ez\bigl[\vhat(f_d)\bigr]
  = \frac{\alpha_d^2}{Q_d}\,\Ez\bigl[\vhat(g_d)\bigr]
  = \frac{\alpha_d^2}{Q_d}\cdot Q_d
  = \alpha_d^2
\]
by the definition \eqref{eq:Qdef}. Strict positivity of $Q_d$ holds under the conditions of Lemma~\ref{thm:Q}.
\end{proof}

\begin{proof}[Proof of the additive calibration \eqref{eq:calib-total}]
Let $\widehat{\Cov}_N(u,v) = (N-1)^{-1}u\T P_N v$, which is bilinear and symmetric with $\vhat(u) = \widehat{\Cov}_N(u,u)$. Bilinearity gives the exact decomposition
\[
  \vhat\Bigl(\sum_d f_d\Bigr)
  = \sum_{d=1}^{D}\vhat(f_d)
    + 2\sum_{1\le d<e\le D}\widehat{\Cov}_N(f_d,f_e).
\]
For $d \ne e$, cyclicity of the trace as in \eqref{eq:quadexp} gives
\[
  \Ez\bigl[\widehat{\Cov}_N(f_d,f_e)\bigr]
  = \frac{1}{N-1}\tr\bigl(P_N\,\Ez[f_ef_d\T]\bigr) = 0,
\]
since given $X$, $\rho$ and $\alpha$ the vectors $f_d$ and $f_e$ are functions of the independent $z_d$ and $z_e$ alone and each has mean zero, so $\Ez[f_ef_d\T] = \Ez[f_e]\Ez[f_d]\T = 0$. Applying \eqref{eq:calib-comp} to each diagonal term gives $\sum_d\alpha_d^2$, which equals $\alpha^2\sum_d\psi_d = \alpha^2$ because $\psi$ lies in the simplex.
\end{proof}

\clearpage

\section{Additional Simulation Results}
\label{app:simulation}

This appendix gives the full results of the simulation study of Section~\ref{sec:simulation}. The settings are those of Section~\ref{sec:sim-setup}. In the figures and tables, $\rho$ in a panel title or in the first column of a table is the generating length scale $\rho_0$, and RHS without a qualifier and RHS ($p_0{=}5$) are the calibration with $p_0 = \min(5, D-1)$.

\paragraph{Tables.}
For each DGP and kernel, three tables give the coverage of the central $90\%$ posterior intervals, the RMSE of the posterior mean relative to R2-GP, and the sampling diagnostics. All entries are means over $50$ replications, with one column for each $D$. For $\alpha$ and $\rho$, coverage and RMSE are taken across the components. A relative RMSE above one indicates worse recovery than under R2-GP. For each fit, $\widehat{R}$ is the largest value and ESS is the smallest bulk effective sample size over all $\alpha_d$, $\rho_d$, $\sigma$ and the log density, and divergences are counted among the $4000$ post-warm-up draws. Bold marks the best value in each column for each $\rho_0$.

\subsection{Dense DGP}
\label{app:sim-dense}

Figures~\ref{fig:dense-elpd-se} to \ref{fig:dense-elpd-m52} show the mean $\Delta\mathrm{elpd}$ and Figures~\ref{fig:dense-r2-se} to \ref{fig:dense-r2-m52} the posterior of $\Remp$, for the three kernels and the three length scales. Figure~\ref{fig:sim-r2} is the condition with the SE kernel and $\rho_0 = 0.3$. Tables~\ref{tab:coverage-dense-se} to \ref{tab:sampling-dense-m52} give the parameter recovery and the sampling diagnostics. Averaged over replications, the posterior mean of $\Remp$ minus $\Rsq_{\mathrm{true}}$ lies in $[-0.05, 0.00]$ under R2-GP in all conditions. At $D = 100$, it lies in $[0.62, 0.64]$ under the normal prior, in $[0.23, 0.62]$ under NG and in $[-0.27, -0.21]$ under RHS ($p_0{=}5$). The recovery of $\rho$ is governed by its informative prior: its coverage is $100\%$ after rounding and its RMSE is below $0.03$ under all priors.

\begin{figure}[!htbp]
  \centering
  \resizebox{0.85\textwidth}{!}{\input{figures/tikz/appendix/se/elpd_difference_paper_dense_se}}
  \caption{Predictive Performance, Dense DGP, SE Kernel. Mean $\Delta\mathrm{elpd}$ over $50$ replications (solid) with $\pm 1$ Monte Carlo standard error (dotted), for each generating length scale $\rho_0$ (panels). Dashed lines mark $\pm 4$ and negative values favour R2-GP.}
  \label{fig:dense-elpd-se}
\end{figure}

\begin{figure}[!htbp]
  \centering
  \resizebox{0.85\textwidth}{!}{\input{figures/tikz/appendix/m32/elpd_difference_paper_dense_m32}}
  \caption{Predictive Performance, Dense DGP, Mat\'ern $3/2$ Kernel. As in Figure~\ref{fig:dense-elpd-se}.}
  \label{fig:dense-elpd-m32}
\end{figure}

\begin{figure}[!htbp]
  \centering
  \resizebox{0.85\textwidth}{!}{\input{figures/tikz/appendix/m52/elpd_difference_paper_dense_m52}}
  \caption{Predictive Performance, Dense DGP, Mat\'ern $5/2$ Kernel. As in Figure~\ref{fig:dense-elpd-se}.}
  \label{fig:dense-elpd-m52}
\end{figure}

\begin{figure}[!htbp]
  \centering
  \includegraphics[width=0.7\linewidth]{figures/tikz/appendix/se/posterior_r2.pdf}
  \caption{Explained Variance, Dense DGP, SE Kernel. Posterior means (points) and central $90\%$ intervals (bars) of $\Remp$ for each replication, for each generating length scale $\rho_0$ (rows). Crosses mark $\Rsq_{\mathrm{true}}$.}
  \label{fig:dense-r2-se}
\end{figure}

\begin{figure}[!htbp]
  \centering
  \includegraphics[width=0.7\linewidth]{figures/tikz/appendix/m32/posterior_r2.pdf}
  \caption{Explained Variance, Dense DGP, Mat\'ern $3/2$ Kernel. As in Figure~\ref{fig:dense-r2-se}.}
  \label{fig:dense-r2-m32}
\end{figure}

\begin{figure}[!htbp]
  \centering
  \includegraphics[width=0.7\linewidth]{figures/tikz/appendix/m52/posterior_r2.pdf}
  \caption{Explained Variance, Dense DGP, Mat\'ern $5/2$ Kernel. As in Figure~\ref{fig:dense-r2-se}.}
  \label{fig:dense-r2-m52}
\end{figure}

\clearpage

\begin{table}[p]
\apptable
\footnotesize
\setlength{\tabcolsep}{3pt}
\caption{Interval Coverage (\%), Dense DGP, SE Kernel. Bold marks the coverage closest to $90\%$.}\label{tab:coverage-dense-se}
\begin{tabular}{ll *{15}{r}}
\toprule
 &  & \multicolumn{5}{c}{$\alpha$} & \multicolumn{5}{c}{$\rho$} & \multicolumn{5}{c}{$\sigma$} \\
\cmidrule(lr){3-7} \cmidrule(lr){8-12} \cmidrule(lr){13-17}
$\rho$ & Prior & 5 & 10 & 30 & 50 & 100 & 5 & 10 & 30 & 50 & 100 & 5 & 10 & 30 & 50 & 100 \\
\midrule
0.3 & R2-GP & 97 & 100 & 100 & 99 & 100 & \textbf{100} & \textbf{100} & \textbf{100} & \textbf{100} & \textbf{100} & 92 & \textbf{82} & \textbf{92} & \textbf{94} & \textbf{98} \\
 & Normal & 95 & 96 & \textbf{95} & \textbf{96} & 100 & \textbf{100} & \textbf{100} & \textbf{100} & \textbf{100} & \textbf{100} & \textbf{90} & 80 & 38 & 2 & 0 \\
 & NG & \textbf{91} & 97 & 80 & 77 & \textbf{99} & \textbf{100} & \textbf{100} & \textbf{100} & \textbf{100} & \textbf{100} & 92 & 80 & 72 & 68 & 2 \\
 & RHS ($p_0{=}5$) & 84 & \textbf{85} & 63 & 47 & 18 & \textbf{100} & \textbf{100} & \textbf{100} & \textbf{100} & \textbf{100} & 92 & 74 & 64 & 54 & 42 \\
\addlinespace
0.5 & R2-GP & 99 & 99 & 100 & 100 & 100 & \textbf{100} & \textbf{100} & \textbf{100} & \textbf{100} & \textbf{100} & 94 & \textbf{86} & \textbf{88} & \textbf{90} & \textbf{92} \\
 & Normal & 95 & 95 & 94 & \textbf{94} & \textbf{99} & \textbf{100} & \textbf{100} & \textbf{100} & \textbf{100} & \textbf{100} & \textbf{92} & 84 & 60 & 8 & 0 \\
 & NG & 94 & 97 & \textbf{90} & 64 & \textbf{99} & \textbf{100} & \textbf{100} & \textbf{100} & \textbf{100} & \textbf{100} & 94 & 84 & 72 & 66 & 34 \\
 & RHS ($p_0{=}5$) & \textbf{88} & \textbf{88} & 82 & 52 & 33 & \textbf{100} & \textbf{100} & \textbf{100} & \textbf{100} & \textbf{100} & 94 & 84 & 70 & 58 & 48 \\
\addlinespace
0.7 & R2-GP & 98 & 99 & 100 & 100 & 99 & \textbf{100} & \textbf{100} & \textbf{100} & \textbf{100} & \textbf{100} & 92 & \textbf{90} & \textbf{88} & \textbf{88} & \textbf{96} \\
 & Normal & 95 & 95 & 94 & \textbf{95} & 96 & \textbf{100} & \textbf{100} & \textbf{100} & \textbf{100} & \textbf{100} & \textbf{90} & 84 & 72 & 24 & 0 \\
 & NG & 98 & 98 & \textbf{93} & 68 & \textbf{88} & \textbf{100} & \textbf{100} & \textbf{100} & \textbf{100} & \textbf{100} & 92 & 86 & 74 & 72 & 58 \\
 & RHS ($p_0{=}5$) & \textbf{87} & \textbf{89} & 85 & 57 & 40 & \textbf{100} & \textbf{100} & \textbf{100} & \textbf{100} & \textbf{100} & 92 & 86 & 74 & 68 & 54 \\
\bottomrule
\end{tabular}
\end{table}

\begin{table}[p]
\apptable
\footnotesize
\setlength{\tabcolsep}{3pt}
\caption{Relative RMSE, Dense DGP, SE Kernel. The R2-GP rows give its RMSE in italics.}\label{tab:rmse-dense-se}
\resizebox{\textwidth}{!}{%
\begin{tabular}{ll *{15}{r}}
\toprule
 &  & \multicolumn{5}{c}{$\alpha$} & \multicolumn{5}{c}{$\rho$} & \multicolumn{5}{c}{$\sigma$} \\
\cmidrule(lr){3-7} \cmidrule(lr){8-12} \cmidrule(lr){13-17}
$\rho$ & Prior & 5 & 10 & 30 & 50 & 100 & 5 & 10 & 30 & 50 & 100 & 5 & 10 & 30 & 50 & 100 \\
\midrule
0.3 & R2-GP & \textbf{\textit{0.097}} & \textbf{\textit{0.062}} & \textbf{\textit{0.033}} & \textbf{\textit{0.025}} & \textbf{\textit{0.017}} & \textit{0.022} & \textit{0.016} & \textit{0.008} & \textit{0.006} & \textit{0.003} & \textit{0.054} & \textit{0.084} & \textbf{\textit{0.079}} & \textbf{\textit{0.082}} & \textbf{\textit{0.078}} \\
 & Normal & 1.57 & 2.30 & 4.94 & 7.59 & 11.61 & 1.05 & 1.22 & 1.84 & 2.50 & 3.13 & \textbf{0.98} & \textbf{0.95} & 2.65 & 7.73 & 9.97 \\
 & NG & 1.52 & 1.88 & 2.74 & 2.92 & 3.13 & 0.94 & 0.98 & 1.12 & 1.30 & 3.13 & 1.01 & 1.02 & 1.66 & 2.30 & 9.26 \\
 & RHS ($p_0{=}5$) & 1.64 & 1.97 & 2.64 & 2.93 & 3.46 & \textbf{0.87} & \textbf{0.76} & \textbf{0.57} & \textbf{0.54} & \textbf{0.58} & 1.11 & 1.30 & 1.72 & 1.90 & 2.22 \\
\addlinespace
0.5 & R2-GP & \textbf{\textit{0.092}} & \textbf{\textit{0.061}} & \textbf{\textit{0.030}} & \textbf{\textit{0.029}} & \textbf{\textit{0.018}} & \textit{0.016} & \textit{0.012} & \textit{0.006} & \textit{0.004} & \textit{0.003} & \textit{0.053} & \textbf{\textit{0.065}} & \textbf{\textit{0.077}} & \textbf{\textit{0.078}} & \textbf{\textit{0.097}} \\
 & Normal & 1.96 & 2.85 & 6.65 & 8.05 & 13.26 & 1.01 & 1.07 & 1.60 & 2.50 & 3.65 & \textbf{0.98} & 1.02 & 1.83 & 4.79 & 7.92 \\
 & NG & 1.66 & 2.02 & 3.05 & 2.89 & 3.35 & 0.93 & 0.95 & 1.06 & 1.13 & 2.70 & 1.00 & 1.03 & 1.49 & 1.65 & 5.26 \\
 & RHS ($p_0{=}5$) & 1.75 & 2.02 & 2.61 & 2.51 & 3.08 & \textbf{0.88} & \textbf{0.81} & \textbf{0.68} & \textbf{0.66} & \textbf{0.81} & 1.07 & 1.21 & 1.48 & 1.71 & 1.88 \\
\addlinespace
0.7 & R2-GP & \textbf{\textit{0.095}} & \textbf{\textit{0.061}} & \textbf{\textit{0.031}} & \textbf{\textit{0.027}} & \textbf{\textit{0.018}} & \textit{0.011} & \textit{0.008} & \textit{0.004} & \textit{0.003} & \textit{0.002} & \textit{0.053} & \textbf{\textit{0.065}} & \textbf{\textit{0.074}} & \textbf{\textit{0.075}} & \textbf{\textit{0.082}} \\
 & Normal & 2.07 & 3.11 & 7.12 & 8.98 & 15.52 & 0.94 & 1.08 & 1.45 & 1.98 & 6.32 & \textbf{0.98} & \textbf{1.00} & 1.41 & 3.23 & 9.53 \\
 & NG & 1.71 & 2.05 & 3.02 & 3.04 & 3.50 & 0.92 & 0.96 & 1.01 & 1.27 & 1.82 & 1.00 & 1.03 & 1.41 & 1.52 & 3.34 \\
 & RHS ($p_0{=}5$) & 1.79 & 2.06 & 2.48 & 2.55 & 3.04 & \textbf{0.87} & \textbf{0.84} & \textbf{0.70} & \textbf{0.71} & \textbf{0.78} & 1.05 & 1.17 & 1.42 & 1.72 & 1.97 \\
\bottomrule
\end{tabular}
}
\end{table}

\begin{table}[p]
\apptable
\scriptsize
\setlength{\tabcolsep}{2.5pt}
\caption{Sampling Diagnostics, Dense DGP, SE Kernel. Time is the sampling time in seconds.}\label{tab:sampling-dense-se}
\begin{tabular}{ll *{20}{r}}
\toprule
 &  & \multicolumn{5}{c}{$\widehat{R}$} & \multicolumn{5}{c}{ESS} & \multicolumn{5}{c}{Divergences} & \multicolumn{5}{c}{Time (s)} \\
\cmidrule(lr){3-7} \cmidrule(lr){8-12} \cmidrule(lr){13-17} \cmidrule(lr){18-22}
$\rho$ & Prior & 5 & 10 & 30 & 50 & 100 & 5 & 10 & 30 & 50 & 100 & 5 & 10 & 30 & 50 & 100 & 5 & 10 & 30 & 50 & 100 \\
\midrule
0.3 & R2-GP & \textbf{1.00} & \textbf{1.00} & \textbf{1.01} & \textbf{1.01} & \textbf{1.01} & \textbf{1078} & \textbf{998} & \textbf{814} & \textbf{661} & 498 & \textbf{0.0} & \textbf{0.0} & \textbf{0.2} & \textbf{0.6} & \textbf{1.9} & \textbf{5.7} & \textbf{9.1} & \textbf{26} & \textbf{51} & \textbf{129} \\
 & Normal & \textbf{1.00} & 1.01 & \textbf{1.01} & 1.26 & 1.37 & 951 & 868 & 480 & 43 & 18 & 1.2 & 1.4 & 5.8 & 283 & 235 & 7.7 & 16 & 76 & 458 & 1634 \\
 & NG & 1.02 & 1.04 & 1.17 & 1.40 & 1.12 & 569 & 222 & 39 & 25 & 62 & 14 & 10 & 116 & 116 & 78 & 12 & 32 & 204 & 350 & 903 \\
 & RHS ($p_0{=}5$) & \textbf{1.00} & 1.01 & \textbf{1.01} & \textbf{1.01} & \textbf{1.01} & 896 & 770 & 551 & 578 & \textbf{708} & 2.3 & 3.6 & 7.2 & 15 & 12 & 8.4 & 14 & 36 & 63 & 134 \\
\addlinespace
0.5 & R2-GP & \textbf{1.00} & \textbf{1.00} & \textbf{1.01} & \textbf{1.01} & \textbf{1.01} & \textbf{1176} & \textbf{1148} & \textbf{1019} & \textbf{946} & \textbf{607} & \textbf{0.6} & \textbf{0.0} & \textbf{0.0} & \textbf{0.3} & \textbf{1.4} & \textbf{7.2} & \textbf{12} & \textbf{32} & \textbf{56} & 191 \\
 & Normal & \textbf{1.00} & \textbf{1.00} & \textbf{1.01} & 1.05 & 1.37 & 1043 & 973 & 756 & 217 & 19 & 2.0 & 1.6 & 1.1 & 80 & 233 & 9.3 & 20 & 97 & 349 & 1292 \\
 & NG & 1.01 & 1.03 & 1.15 & 1.25 & 1.43 & 605 & 328 & 51 & 24 & 32 & 8.9 & 18 & 69 & 199 & 49 & 13 & 31 & 175 & 395 & 961 \\
 & RHS ($p_0{=}5$) & \textbf{1.00} & 1.01 & \textbf{1.01} & \textbf{1.01} & \textbf{1.01} & 1035 & 911 & 578 & 624 & 568 & 4.1 & 4.6 & 17 & 16 & 32 & 11 & 16 & 45 & 69 & \textbf{171} \\
\addlinespace
0.7 & R2-GP & \textbf{1.00} & \textbf{1.00} & \textbf{1.00} & \textbf{1.01} & \textbf{1.01} & \textbf{1289} & \textbf{1266} & \textbf{1099} & \textbf{999} & \textbf{684} & \textbf{0.7} & \textbf{0.4} & \textbf{0.0} & \textbf{0.0} & \textbf{1.7} & \textbf{8.1} & \textbf{14} & \textbf{38} & \textbf{65} & \textbf{196} \\
 & Normal & \textbf{1.00} & \textbf{1.00} & 1.01 & \textbf{1.01} & 1.48 & 1111 & 1078 & 919 & 531 & 14 & 2.9 & 1.9 & 0.8 & 8.2 & 247 & 13 & 26 & 128 & 324 & 1457 \\
 & NG & 1.01 & 1.03 & 1.15 & 1.23 & 1.61 & 609 & 334 & 54 & 22 & 20 & 28 & 34 & 89 & 193 & 58 & 17 & 41 & 205 & 439 & 1125 \\
 & RHS ($p_0{=}5$) & \textbf{1.00} & 1.01 & 1.01 & \textbf{1.01} & \textbf{1.01} & 1093 & 930 & 632 & 636 & 587 & 6.6 & 10 & 12 & 18 & 21 & 13 & 21 & 57 & 82 & 199 \\
\bottomrule
\end{tabular}
\end{table}

\clearpage

\begin{table}[p]
\apptable
\footnotesize
\setlength{\tabcolsep}{3pt}
\caption{Interval Coverage (\%), Dense DGP, Mat\'ern $3/2$ Kernel. As in Table~\ref{tab:coverage-dense-se}.}\label{tab:coverage-dense-m32}
\begin{tabular}{ll *{15}{r}}
\toprule
 &  & \multicolumn{5}{c}{$\alpha$} & \multicolumn{5}{c}{$\rho$} & \multicolumn{5}{c}{$\sigma$} \\
\cmidrule(lr){3-7} \cmidrule(lr){8-12} \cmidrule(lr){13-17}
$\rho$ & Prior & 5 & 10 & 30 & 50 & 100 & 5 & 10 & 30 & 50 & 100 & 5 & 10 & 30 & 50 & 100 \\
\midrule
0.3 & R2-GP & 98 & 99 & 100 & 100 & \textbf{100} & \textbf{100} & \textbf{100} & \textbf{100} & \textbf{100} & \textbf{100} & \textbf{92} & \textbf{88} & \textbf{96} & \textbf{90} & \textbf{100} \\
 & Normal & 94 & 96 & 95 & 98 & \textbf{100} & \textbf{100} & \textbf{100} & \textbf{100} & \textbf{100} & \textbf{100} & 98 & 84 & 2 & 0 & 0 \\
 & NG & \textbf{92} & \textbf{95} & \textbf{87} & \textbf{91} & \textbf{100} & \textbf{100} & \textbf{100} & \textbf{100} & \textbf{100} & \textbf{100} & 96 & 82 & \textbf{84} & 52 & 0 \\
 & RHS ($p_0{=}5$) & 87 & 78 & 60 & 42 & 16 & \textbf{100} & \textbf{100} & \textbf{100} & \textbf{100} & \textbf{100} & 82 & 82 & 70 & 50 & 46 \\
\addlinespace
0.5 & R2-GP & 98 & 99 & 98 & 100 & \textbf{98} & \textbf{100} & \textbf{100} & \textbf{100} & \textbf{100} & \textbf{100} & \textbf{88} & 94 & \textbf{94} & \textbf{88} & \textbf{92} \\
 & Normal & 94 & \textbf{94} & \textbf{95} & \textbf{96} & 100 & \textbf{100} & \textbf{100} & \textbf{100} & \textbf{100} & \textbf{100} & 86 & \textbf{88} & 36 & 0 & 0 \\
 & NG & 93 & 95 & 82 & 77 & 99 & \textbf{100} & \textbf{100} & \textbf{100} & \textbf{100} & \textbf{100} & 86 & 86 & 80 & 66 & 14 \\
 & RHS ($p_0{=}5$) & \textbf{89} & 81 & 63 & 52 & 29 & \textbf{100} & \textbf{100} & \textbf{100} & \textbf{100} & \textbf{100} & 86 & 80 & 68 & 44 & 54 \\
\addlinespace
0.7 & R2-GP & 97 & 97 & 98 & 100 & \textbf{98} & \textbf{100} & \textbf{100} & \textbf{100} & \textbf{100} & \textbf{100} & 88 & \textbf{92} & \textbf{98} & \textbf{88} & \textbf{94} \\
 & Normal & \textbf{93} & 94 & \textbf{95} & \textbf{94} & 99 & \textbf{100} & \textbf{100} & \textbf{100} & \textbf{100} & \textbf{100} & \textbf{90} & 84 & 50 & 0 & 0 \\
 & NG & 95 & 96 & 84 & 75 & 99 & \textbf{100} & \textbf{100} & \textbf{100} & \textbf{100} & \textbf{100} & 88 & 86 & \textbf{82} & 70 & 30 \\
 & RHS ($p_0{=}5$) & \textbf{87} & \textbf{87} & 67 & 61 & 35 & \textbf{100} & \textbf{100} & \textbf{100} & \textbf{100} & \textbf{100} & 86 & 86 & 74 & 60 & 58 \\
\bottomrule
\end{tabular}
\end{table}

\begin{table}[p]
\apptable
\footnotesize
\setlength{\tabcolsep}{3pt}
\caption{Relative RMSE, Dense DGP, Mat\'ern $3/2$ Kernel. As in Table~\ref{tab:rmse-dense-se}.}\label{tab:rmse-dense-m32}
\resizebox{\textwidth}{!}{%
\begin{tabular}{ll *{15}{r}}
\toprule
 &  & \multicolumn{5}{c}{$\alpha$} & \multicolumn{5}{c}{$\rho$} & \multicolumn{5}{c}{$\sigma$} \\
\cmidrule(lr){3-7} \cmidrule(lr){8-12} \cmidrule(lr){13-17}
$\rho$ & Prior & 5 & 10 & 30 & 50 & 100 & 5 & 10 & 30 & 50 & 100 & 5 & 10 & 30 & 50 & 100 \\
\midrule
0.3 & R2-GP & \textbf{\textit{0.087}} & \textbf{\textit{0.065}} & \textbf{\textit{0.034}} & \textbf{\textit{0.025}} & \textbf{\textit{0.016}} & \textit{0.019} & \textit{0.012} & \textit{0.006} & \textit{0.005} & \textit{0.003} & \textit{0.057} & \textbf{\textit{0.068}} & \textbf{\textit{0.067}} & \textbf{\textit{0.094}} & \textbf{\textit{0.087}} \\
 & Normal & 1.60 & 1.90 & 4.92 & 6.26 & 11.18 & 1.07 & 1.17 & 2.09 & 2.18 & 3.42 & \textbf{0.93} & \textbf{1.00} & 5.67 & 7.54 & 8.93 \\
 & NG & 1.58 & 1.75 & 2.73 & 2.82 & 2.99 & 0.96 & 0.93 & 1.22 & 1.70 & 2.44 & 0.99 & 1.05 & 1.93 & 3.13 & 8.62 \\
 & RHS ($p_0{=}5$) & 1.70 & 1.95 & 2.72 & 3.01 & 3.65 & \textbf{0.87} & \textbf{0.71} & \textbf{0.55} & \textbf{0.57} & \textbf{0.74} & 1.16 & 1.39 & 2.03 & 1.78 & 2.13 \\
\addlinespace
0.5 & R2-GP & \textbf{\textit{0.096}} & \textbf{\textit{0.066}} & \textbf{\textit{0.035}} & \textbf{\textit{0.024}} & \textbf{\textit{0.016}} & \textit{0.012} & \textit{0.008} & \textit{0.004} & \textit{0.003} & \textit{0.002} & \textbf{\textit{0.065}} & \textit{0.068} & \textbf{\textit{0.064}} & \textbf{\textit{0.089}} & \textbf{\textit{0.098}} \\
 & Normal & 1.77 & 2.19 & 4.85 & 8.62 & 12.48 & 0.97 & 1.03 & 1.50 & 2.60 & 4.45 & 1.02 & \textbf{0.94} & 3.40 & 6.66 & 7.85 \\
 & NG & 1.60 & 1.84 & 2.61 & 3.35 & 3.38 & 0.93 & 0.91 & 1.05 & 1.39 & 2.46 & 1.02 & 1.01 & 1.76 & 2.09 & 6.61 \\
 & RHS ($p_0{=}5$) & 1.67 & 1.93 & 2.52 & 3.07 & 3.47 & \textbf{0.88} & \textbf{0.77} & \textbf{0.63} & \textbf{0.67} & \textbf{0.96} & 1.09 & 1.31 & 1.91 & 1.98 & 1.74 \\
\addlinespace
0.7 & R2-GP & \textbf{\textit{0.097}} & \textbf{\textit{0.062}} & \textbf{\textit{0.036}} & \textbf{\textit{0.024}} & \textbf{\textit{0.017}} & \textit{0.007} & \textit{0.005} & \textit{0.003} & \textit{0.002} & \textbf{\textit{0.002}} & \textit{0.064} & \textit{0.066} & \textbf{\textit{0.059}} & \textbf{\textit{0.079}} & \textbf{\textit{0.093}} \\
 & Normal & 1.81 & 2.62 & 5.09 & 9.39 & 13.07 & 0.92 & 1.00 & 1.32 & 2.08 & 6.92 & \textbf{0.96} & \textbf{0.97} & 2.75 & 5.92 & 8.48 \\
 & NG & 1.58 & 1.92 & 2.57 & 3.31 & 3.31 & 0.89 & 0.91 & 1.15 & 1.27 & 2.11 & 0.99 & 0.99 & 1.60 & 1.89 & 5.85 \\
 & RHS ($p_0{=}5$) & 1.66 & 1.99 & 2.40 & 2.96 & 3.26 & \textbf{0.84} & \textbf{0.77} & \textbf{0.75} & \textbf{0.91} & 1.11 & 1.08 & 1.24 & 1.86 & 2.06 & 1.75 \\
\bottomrule
\end{tabular}
}
\end{table}

\begin{table}[p]
\apptable
\scriptsize
\setlength{\tabcolsep}{2.5pt}
\caption{Sampling Diagnostics, Dense DGP, Mat\'ern $3/2$ Kernel. As in Table~\ref{tab:sampling-dense-se}.}\label{tab:sampling-dense-m32}
\begin{tabular}{ll *{20}{r}}
\toprule
 &  & \multicolumn{5}{c}{$\widehat{R}$} & \multicolumn{5}{c}{ESS} & \multicolumn{5}{c}{Divergences} & \multicolumn{5}{c}{Time (s)} \\
\cmidrule(lr){3-7} \cmidrule(lr){8-12} \cmidrule(lr){13-17} \cmidrule(lr){18-22}
$\rho$ & Prior & 5 & 10 & 30 & 50 & 100 & 5 & 10 & 30 & 50 & 100 & 5 & 10 & 30 & 50 & 100 & 5 & 10 & 30 & 50 & 100 \\
\midrule
0.3 & R2-GP & \textbf{1.00} & \textbf{1.00} & \textbf{1.01} & \textbf{1.01} & \textbf{1.01} & \textbf{1053} & \textbf{990} & \textbf{716} & \textbf{581} & 475 & \textbf{0.0} & \textbf{0.0} & \textbf{0.1} & \textbf{2.4} & \textbf{6.1} & \textbf{7.7} & \textbf{13} & \textbf{38} & \textbf{72} & \textbf{215} \\
 & Normal & \textbf{1.00} & 1.01 & 1.07 & 1.25 & 1.37 & 925 & 807 & 171 & 23 & 16 & 0.6 & 0.4 & 114 & 243 & 247 & 9.1 & 20 & 177 & 738 & 2401 \\
 & NG & 1.02 & 1.05 & 1.19 & 1.39 & 1.08 & 512 & 197 & 37 & 28 & 58 & 2.3 & 12 & 69 & 66 & 53 & 14 & 49 & 272 & 450 & 1678 \\
 & RHS ($p_0{=}5$) & 1.01 & 1.01 & \textbf{1.01} & \textbf{1.01} & 1.02 & 861 & 748 & 541 & 579 & \textbf{678} & 1.2 & 2.3 & 6.9 & 18 & 28 & 11 & 19 & 54 & 99 & 273 \\
\addlinespace
0.5 & R2-GP & \textbf{1.00} & \textbf{1.00} & \textbf{1.01} & \textbf{1.01} & \textbf{1.01} & \textbf{1179} & \textbf{1120} & \textbf{966} & \textbf{775} & 540 & \textbf{0.0} & \textbf{0.1} & \textbf{0.1} & \textbf{0.4} & \textbf{7.2} & \textbf{8.7} & \textbf{15} & \textbf{42} & \textbf{73} & \textbf{265} \\
 & Normal & \textbf{1.00} & 1.01 & \textbf{1.01} & 1.20 & 1.39 & 1016 & 988 & 545 & 42 & 17 & 1.7 & 0.8 & 6.5 & 239 & 273 & 12 & 25 & 138 & 690 & 2543 \\
 & NG & 1.03 & 1.04 & 1.18 & 1.33 & 1.21 & 650 & 276 & 38 & 23 & 50 & 6.1 & 28 & 110 & 118 & 73 & 16 & 47 & 294 & 483 & 1760 \\
 & RHS ($p_0{=}5$) & \textbf{1.00} & 1.01 & \textbf{1.01} & \textbf{1.01} & 1.02 & 940 & 833 & 614 & 559 & \textbf{634} & 2.2 & 8.4 & 8.1 & 17 & 40 & 14 & 22 & 59 & 112 & 317 \\
\addlinespace
0.7 & R2-GP & \textbf{1.00} & \textbf{1.00} & \textbf{1.01} & \textbf{1.01} & \textbf{1.01} & \textbf{1228} & \textbf{1179} & \textbf{1099} & \textbf{856} & \textbf{610} & \textbf{1.0} & \textbf{0.0} & \textbf{0.1} & \textbf{0.4} & \textbf{2.0} & \textbf{11} & \textbf{17} & \textbf{49} & \textbf{90} & \textbf{282} \\
 & Normal & \textbf{1.00} & \textbf{1.00} & \textbf{1.01} & 1.09 & 1.45 & 1057 & 1026 & 754 & 112 & 16 & 2.1 & 1.0 & 2.3 & 130 & 238 & 14 & 30 & 148 & 574 & 2364 \\
 & NG & 1.01 & 1.03 & 1.15 & 1.28 & 1.37 & 679 & 317 & 41 & 21 & 31 & 18 & 24 & 122 & 118 & 59 & 20 & 49 & 294 & 568 & 1924 \\
 & RHS ($p_0{=}5$) & \textbf{1.00} & 1.01 & \textbf{1.01} & 1.02 & 1.02 & 1018 & 921 & 643 & 530 & 561 & 4.2 & 5.5 & 14 & 33 & 37 & 16 & 27 & 72 & 127 & 349 \\
\bottomrule
\end{tabular}
\end{table}

\clearpage

\begin{table}[p]
\apptable
\footnotesize
\setlength{\tabcolsep}{3pt}
\caption{Interval Coverage (\%), Dense DGP, Mat\'ern $5/2$ Kernel. As in Table~\ref{tab:coverage-dense-se}.}\label{tab:coverage-dense-m52}
\begin{tabular}{ll *{15}{r}}
\toprule
 &  & \multicolumn{5}{c}{$\alpha$} & \multicolumn{5}{c}{$\rho$} & \multicolumn{5}{c}{$\sigma$} \\
\cmidrule(lr){3-7} \cmidrule(lr){8-12} \cmidrule(lr){13-17}
$\rho$ & Prior & 5 & 10 & 30 & 50 & 100 & 5 & 10 & 30 & 50 & 100 & 5 & 10 & 30 & 50 & 100 \\
\midrule
0.3 & R2-GP & 92 & 97 & 100 & 100 & \textbf{100} & \textbf{100} & \textbf{100} & \textbf{100} & \textbf{100} & \textbf{100} & 82 & \textbf{88} & 96 & \textbf{92} & \textbf{98} \\
 & Normal & 93 & 97 & \textbf{95} & 97 & \textbf{100} & \textbf{100} & \textbf{100} & \textbf{100} & \textbf{100} & \textbf{100} & \textbf{88} & 86 & 20 & 0 & 0 \\
 & NG & \textbf{89} & \textbf{95} & \textbf{85} & \textbf{84} & \textbf{100} & \textbf{100} & \textbf{100} & \textbf{100} & \textbf{100} & \textbf{100} & 84 & 86 & \textbf{86} & 48 & 2 \\
 & RHS ($p_0{=}5$) & 81 & 80 & 59 & 49 & 22 & \textbf{100} & \textbf{100} & \textbf{100} & \textbf{100} & \textbf{100} & 82 & 86 & 68 & 64 & 58 \\
\addlinespace
0.5 & R2-GP & 98 & 98 & 98 & 99 & 100 & \textbf{100} & \textbf{100} & \textbf{100} & \textbf{100} & \textbf{100} & \textbf{90} & 92 & \textbf{96} & \textbf{92} & \textbf{94} \\
 & Normal & 94 & \textbf{95} & \textbf{95} & \textbf{94} & 100 & \textbf{100} & \textbf{100} & \textbf{100} & \textbf{100} & \textbf{100} & \textbf{90} & 88 & 48 & 2 & 0 \\
 & NG & 94 & 97 & 82 & 76 & \textbf{99} & \textbf{100} & \textbf{100} & \textbf{100} & \textbf{100} & \textbf{100} & 88 & \textbf{90} & 80 & 70 & 14 \\
 & RHS ($p_0{=}5$) & \textbf{88} & 80 & 63 & 56 & 27 & \textbf{100} & \textbf{100} & \textbf{100} & \textbf{100} & \textbf{100} & 86 & 88 & 62 & 60 & 50 \\
\addlinespace
0.7 & R2-GP & 98 & 98 & 98 & 100 & 100 & \textbf{100} & \textbf{100} & \textbf{100} & \textbf{100} & \textbf{100} & \textbf{88} & 94 & \textbf{90} & \textbf{94} & \textbf{94} \\
 & Normal & 93 & \textbf{95} & \textbf{95} & \textbf{94} & 99 & \textbf{100} & \textbf{100} & \textbf{100} & \textbf{100} & \textbf{100} & \textbf{88} & \textbf{90} & 60 & 8 & 0 \\
 & NG & 95 & 97 & 84 & 77 & \textbf{95} & \textbf{100} & \textbf{100} & \textbf{100} & \textbf{100} & \textbf{100} & \textbf{88} & 92 & 78 & 76 & 44 \\
 & RHS ($p_0{=}5$) & \textbf{90} & \textbf{85} & 66 & 61 & 38 & \textbf{100} & \textbf{100} & \textbf{100} & \textbf{100} & \textbf{100} & 86 & 88 & 66 & 66 & 54 \\
\bottomrule
\end{tabular}
\end{table}

\begin{table}[p]
\apptable
\footnotesize
\setlength{\tabcolsep}{3pt}
\caption{Relative RMSE, Dense DGP, Mat\'ern $5/2$ Kernel. As in Table~\ref{tab:rmse-dense-se}.}\label{tab:rmse-dense-m52}
\resizebox{\textwidth}{!}{%
\begin{tabular}{ll *{15}{r}}
\toprule
 &  & \multicolumn{5}{c}{$\alpha$} & \multicolumn{5}{c}{$\rho$} & \multicolumn{5}{c}{$\sigma$} \\
\cmidrule(lr){3-7} \cmidrule(lr){8-12} \cmidrule(lr){13-17}
$\rho$ & Prior & 5 & 10 & 30 & 50 & 100 & 5 & 10 & 30 & 50 & 100 & 5 & 10 & 30 & 50 & 100 \\
\midrule
0.3 & R2-GP & \textbf{\textit{0.100}} & \textbf{\textit{0.066}} & \textbf{\textit{0.031}} & \textbf{\textit{0.025}} & \textbf{\textit{0.017}} & \textit{0.020} & \textit{0.016} & \textit{0.007} & \textit{0.005} & \textit{0.003} & \textbf{\textit{0.066}} & \textbf{\textit{0.071}} & \textbf{\textit{0.056}} & \textbf{\textit{0.082}} & \textbf{\textit{0.078}} \\
 & Normal & 1.58 & 2.03 & 5.16 & 6.79 & 10.64 & 1.03 & 1.14 & 1.94 & 2.37 & 2.89 & \textbf{1.00} & 1.11 & 5.11 & 8.49 & 9.87 \\
 & NG & 1.53 & 1.76 & 2.81 & 3.02 & 2.78 & 0.93 & 0.99 & 1.04 & 1.78 & 2.63 & 1.02 & 1.06 & 1.66 & 3.29 & 9.56 \\
 & RHS ($p_0{=}5$) & 1.62 & 1.89 & 2.91 & 2.96 & 3.36 & \textbf{0.87} & \textbf{0.81} & \textbf{0.54} & \textbf{0.57} & \textbf{0.80} & 1.09 & 1.15 & 2.55 & 1.88 & 1.86 \\
\addlinespace
0.5 & R2-GP & \textbf{\textit{0.099}} & \textbf{\textit{0.067}} & \textbf{\textit{0.034}} & \textbf{\textit{0.024}} & \textbf{\textit{0.018}} & \textit{0.013} & \textit{0.009} & \textit{0.004} & \textit{0.003} & \textit{0.002} & \textbf{\textit{0.054}} & \textbf{\textit{0.063}} & \textbf{\textit{0.070}} & \textbf{\textit{0.086}} & \textbf{\textit{0.087}} \\
 & Normal & 1.81 & 2.20 & 5.18 & 8.96 & 11.94 & 0.95 & 1.09 & 1.58 & 2.44 & 4.71 & 1.02 & 1.03 & 2.31 & 6.07 & 8.97 \\
 & NG & 1.59 & 1.80 & 2.67 & 3.37 & 3.25 & 0.92 & 0.93 & 0.99 & 1.33 & 3.00 & 1.02 & 1.02 & 1.58 & 1.90 & 7.14 \\
 & RHS ($p_0{=}5$) & 1.67 & 1.89 & 2.55 & 3.01 & 3.19 & \textbf{0.85} & \textbf{0.78} & \textbf{0.64} & \textbf{0.67} & \textbf{0.85} & 1.04 & 1.13 & 1.82 & 1.73 & 1.75 \\
\addlinespace
0.7 & R2-GP & \textbf{\textit{0.101}} & \textbf{\textit{0.064}} & \textbf{\textit{0.036}} & \textbf{\textit{0.024}} & \textbf{\textit{0.017}} & \textit{0.009} & \textit{0.006} & \textit{0.003} & \textit{0.002} & \textit{0.002} & \textbf{\textit{0.053}} & \textbf{\textit{0.063}} & \textbf{\textit{0.072}} & \textbf{\textit{0.079}} & \textbf{\textit{0.087}} \\
 & Normal & 1.90 & 2.52 & 5.49 & 9.80 & 13.47 & 0.91 & 0.99 & 1.45 & 1.87 & 5.34 & 1.02 & \textbf{1.00} & 1.67 & 4.58 & 8.83 \\
 & NG & 1.62 & 1.89 & 2.62 & 3.42 & 3.38 & 0.89 & 0.90 & 1.07 & 1.36 & 2.01 & 1.01 & 1.03 & 1.40 & 1.75 & 4.77 \\
 & RHS ($p_0{=}5$) & 1.71 & 1.92 & 2.42 & 2.91 & 3.15 & \textbf{0.86} & \textbf{0.79} & \textbf{0.71} & \textbf{0.73} & \textbf{0.95} & 1.02 & 1.10 & 1.63 & 1.79 & 1.81 \\
\bottomrule
\end{tabular}
}
\end{table}

\begin{table}[p]
\apptable
\scriptsize
\setlength{\tabcolsep}{2.5pt}
\caption{Sampling Diagnostics, Dense DGP, Mat\'ern $5/2$ Kernel. As in Table~\ref{tab:sampling-dense-se}.}\label{tab:sampling-dense-m52}
\begin{tabular}{ll *{20}{r}}
\toprule
 &  & \multicolumn{5}{c}{$\widehat{R}$} & \multicolumn{5}{c}{ESS} & \multicolumn{5}{c}{Divergences} & \multicolumn{5}{c}{Time (s)} \\
\cmidrule(lr){3-7} \cmidrule(lr){8-12} \cmidrule(lr){13-17} \cmidrule(lr){18-22}
$\rho$ & Prior & 5 & 10 & 30 & 50 & 100 & 5 & 10 & 30 & 50 & 100 & 5 & 10 & 30 & 50 & 100 & 5 & 10 & 30 & 50 & 100 \\
\midrule
0.3 & R2-GP & \textbf{1.00} & \textbf{1.00} & \textbf{1.01} & \textbf{1.01} & \textbf{1.01} & \textbf{1053} & \textbf{984} & \textbf{765} & \textbf{635} & 448 & \textbf{0.0} & \textbf{0.0} & \textbf{0.2} & \textbf{0.4} & \textbf{4.6} & \textbf{7.2} & \textbf{12} & \textbf{33} & \textbf{59} & \textbf{178} \\
 & Normal & \textbf{1.00} & 1.01 & 1.02 & 1.28 & 1.34 & 932 & 839 & 320 & 30 & 17 & 0.8 & 0.6 & 38 & 233 & 263 & 9.2 & 18 & 135 & 662 & 2012 \\
 & NG & 1.01 & 1.03 & 1.21 & 1.32 & 1.10 & 547 & 265 & 27 & 29 & 58 & 15 & 15 & 104 & 102 & 65 & 16 & 40 & 275 & 397 & 1400 \\
 & RHS ($p_0{=}5$) & 1.01 & 1.01 & \textbf{1.01} & \textbf{1.01} & 1.03 & 860 & 736 & 584 & 558 & \textbf{575} & 2.1 & 2.7 & 7.3 & 12 & 47 & 11 & 18 & 49 & 84 & 242 \\
\addlinespace
0.5 & R2-GP & \textbf{1.00} & \textbf{1.00} & \textbf{1.01} & \textbf{1.01} & \textbf{1.01} & \textbf{1169} & \textbf{1158} & \textbf{996} & \textbf{809} & 618 & \textbf{0.0} & \textbf{0.0} & \textbf{0.1} & \textbf{0.7} & \textbf{1.6} & \textbf{9.0} & \textbf{14} & \textbf{39} & \textbf{69} & \textbf{207} \\
 & Normal & \textbf{1.00} & 1.01 & \textbf{1.01} & 1.14 & 1.38 & 990 & 975 & 677 & 84 & 17 & 1.8 & 2.0 & 1.9 & 136 & 212 & 12 & 23 & 128 & 529 & 2034 \\
 & NG & 1.01 & 1.03 & 1.17 & 1.28 & 1.22 & 644 & 266 & 38 & 23 & 48 & 7.2 & 28 & 106 & 104 & 84 & 16 & 44 & 283 & 525 & 1315 \\
 & RHS ($p_0{=}5$) & \textbf{1.00} & 1.01 & \textbf{1.01} & \textbf{1.01} & \textbf{1.01} & 971 & 896 & 654 & 583 & \textbf{657} & 2.3 & 4.0 & 15 & 22 & 29 & 13 & 21 & 57 & 99 & 238 \\
\addlinespace
0.7 & R2-GP & \textbf{1.00} & \textbf{1.00} & \textbf{1.01} & \textbf{1.01} & \textbf{1.01} & \textbf{1241} & \textbf{1231} & \textbf{1114} & \textbf{902} & \textbf{659} & \textbf{0.2} & \textbf{0.1} & \textbf{0.1} & \textbf{0.6} & \textbf{2.5} & \textbf{9.9} & \textbf{17} & \textbf{48} & \textbf{85} & \textbf{239} \\
 & Normal & \textbf{1.00} & \textbf{1.00} & \textbf{1.01} & 1.03 & 1.36 & 1102 & 1069 & 862 & 279 & 15 & 2.2 & 1.4 & 2.1 & 53 & 228 & 15 & 31 & 150 & 500 & 2106 \\
 & NG & 1.01 & 1.03 & 1.15 & 1.25 & 1.50 & 666 & 330 & 37 & 19 & 29 & 12 & 35 & 119 & 122 & 69 & 18 & 51 & 281 & 542 & 1488 \\
 & RHS ($p_0{=}5$) & 1.01 & 1.01 & \textbf{1.01} & \textbf{1.01} & \textbf{1.01} & 1010 & 964 & 717 & 565 & 593 & 10 & 3.9 & 14 & 16 & 28 & 15 & 24 & 68 & 128 & 310 \\
\bottomrule
\end{tabular}
\end{table}

\clearpage

\subsection{Sparse DGP}
\label{app:sim-sparse}

In the sparse DGP only the first five components are active, with $\alpha_d^2 = \sigma^2\Rsq_0/\{5(1-\Rsq_0)\}$ for $d \le 5$ and $\alpha_d = 0$ otherwise, so that the total signal variance equals that of the dense DGP \eqref{eq:sim-alpha}. All other settings are those of Section~\ref{sec:sim-setup}. The priors are R2-GP, normal, NG and RHS with $p_0 = \min(5, D-1)$, which equals the number of active components for $D \ge 10$. Figures~\ref{fig:sparse-elpd-se} to \ref{fig:sparse-r2-m52} and Tables~\ref{tab:coverage-sparse-se} to \ref{tab:sampling-sparse-m52} give the results. The tables report the RMSE of $\alpha$ separately for the active and the inactive components. For $D \ge 30$, the intervals of R2-GP contain the generating value of an active $\alpha_d$ in at most $27\%$ of cases, and those of RHS in at most $49\%$. This agrees with the uniform allocation for $a = 1$ that Section~\ref{sec:sim-results} discusses. The recovery of $\sigma$ and of $\Rsq_{\mathrm{true}}$ under R2-GP stays accurate.

\begin{figure}[!htbp]
  \centering
  \resizebox{0.85\textwidth}{!}{\input{figures/tikz/appendix/sparse/se/elpd_difference_paper_sparse_se}}
  \caption{Predictive Performance, Sparse DGP, SE Kernel. As in Figure~\ref{fig:dense-elpd-se}.}
  \label{fig:sparse-elpd-se}
\end{figure}

\begin{figure}[!htbp]
  \centering
  \resizebox{0.85\textwidth}{!}{\input{figures/tikz/appendix/sparse/m32/elpd_difference_paper_sparse_m32}}
  \caption{Predictive Performance, Sparse DGP, Mat\'ern $3/2$ Kernel. As in Figure~\ref{fig:dense-elpd-se}.}
  \label{fig:sparse-elpd-m32}
\end{figure}

\clearpage

\begin{figure}[!htbp]
  \centering
  \resizebox{0.85\textwidth}{!}{\input{figures/tikz/appendix/sparse/m52/elpd_difference_paper_sparse_m52}}
  \caption{Predictive Performance, Sparse DGP, Mat\'ern $5/2$ Kernel. As in Figure~\ref{fig:dense-elpd-se}.}
  \label{fig:sparse-elpd-m52}
\end{figure}

\begin{figure}[!htbp]
  \centering
  \includegraphics[width=0.7\linewidth]{figures/tikz/appendix/sparse/se/posterior_r2.pdf}
  \caption{Explained Variance, Sparse DGP, SE Kernel. As in Figure~\ref{fig:dense-r2-se}.}
  \label{fig:sparse-r2-se}
\end{figure}

\begin{figure}[!htbp]
  \centering
  \includegraphics[width=0.7\linewidth]{figures/tikz/appendix/sparse/m32/posterior_r2.pdf}
  \caption{Explained Variance, Sparse DGP, Mat\'ern $3/2$ Kernel. As in Figure~\ref{fig:dense-r2-se}.}
  \label{fig:sparse-r2-m32}
\end{figure}

\begin{figure}[!htbp]
  \centering
  \includegraphics[width=0.7\linewidth]{figures/tikz/appendix/sparse/m52/posterior_r2.pdf}
  \caption{Explained Variance, Sparse DGP, Mat\'ern $5/2$ Kernel. As in Figure~\ref{fig:dense-r2-se}.}
  \label{fig:sparse-r2-m52}
\end{figure}

\clearpage

\newsavebox{\simtablebox}
\newcommand{\simtablesetup}[1]{%
  \setlength{\abovecaptionskip}{11pt}%
  \setlength{\belowcaptionskip}{11pt}%
  \let\simcaption\caption
  \renewcommand{\caption}[1]{\simcaption{#1}}%
  \renewenvironment{table*}[1][]{\begin{table}[p]}{\end{table}}}
\newcommand{\simtable}[2]{%
  \begingroup
  \simtablesetup{#1}%
  \input{#2}%
  \endgroup}
\newcommand{\simtablefitted}[2]{%
  \begingroup
  \simtablesetup{#1}%
  \let\simtabular\tabular
  \let\endsimtabular\endtabular
  \renewenvironment{tabular}%
    {\begin{lrbox}{\simtablebox}\simtabular}%
    {\endsimtabular\end{lrbox}\resizebox{\textwidth}{!}{\usebox{\simtablebox}}}%
  \input{#2}%
  \endgroup}

  \begingroup
  \simtablesetup{Interval Coverage (\%), Sparse DGP, SE Kernel. As in Table~\ref{tab:coverage-dense-se}, for the active components. The interval of a positive parameter cannot contain the zero of an inactive component.}%
  \begin{table*}[t]
\centering
\footnotesize
\setlength{\tabcolsep}{3pt}
\caption{Coverage (\%) of the nominal $90\%$ posterior interval (sparse DGP, squared exponential kernel), averaged over 50 replications. Columns give the number of additive components $D$. For $\alpha$ and $\rho$ it is the share of the active components whose interval contains the truth; the interval of a positive parameter cannot contain the exact zero of an inactive one. Bold marks the coverage closest to $90\%$ in each column within a length-scale. All models use the $N(\rho, 0.1)$ prior centred at the generating length-scale on the length-scales.}\label{tab:coverage-sparse-se}
\begin{tabular}{ll *{15}{r}}
\toprule
 &  & \multicolumn{5}{c}{$\alpha$} & \multicolumn{5}{c}{$\rho$} & \multicolumn{5}{c}{$\sigma$} \\
\cmidrule(lr){3-7} \cmidrule(lr){8-12} \cmidrule(lr){13-17}
$\rho$ & Prior & 5 & 10 & 30 & 50 & 100 & 5 & 10 & 30 & 50 & 100 & 5 & 10 & 30 & 50 & 100 \\
\midrule
0.3 & R2-GP & 97 & 77 & 23 & 11 & 1 & \textbf{100} & \textbf{100} & \textbf{100} & \textbf{100} & \textbf{100} & 92 & \textbf{82} & \textbf{84} & 94 & \textbf{94} \\
 & Normal & 95 & 96 & \textbf{95} & \textbf{95} & \textbf{100} & \textbf{100} & \textbf{100} & \textbf{100} & \textbf{100} & \textbf{100} & \textbf{90} & 78 & 30 & 2 & 0 \\
 & NG & \textbf{92} & \textbf{85} & 64 & 67 & 60 & \textbf{100} & \textbf{100} & \textbf{100} & \textbf{100} & \textbf{100} & 92 & 80 & 80 & 80 & 8 \\
 & RHS ($p_0{=}5$) & 85 & 66 & 44 & 41 & 23 & \textbf{100} & \textbf{100} & \textbf{100} & \textbf{100} & \textbf{100} & 92 & 78 & 80 & \textbf{90} & 84 \\
\addlinespace
0.5 & R2-GP & 99 & 84 & 21 & 9 & 0 & \textbf{100} & \textbf{100} & \textbf{100} & \textbf{100} & \textbf{100} & 94 & \textbf{84} & \textbf{92} & \textbf{90} & \textbf{90} \\
 & Normal & 95 & 95 & \textbf{97} & \textbf{96} & \textbf{98} & \textbf{100} & \textbf{100} & \textbf{100} & \textbf{100} & \textbf{100} & \textbf{92} & 78 & 52 & 8 & 0 \\
 & NG & 94 & \textbf{90} & 66 & 63 & 64 & \textbf{100} & \textbf{100} & \textbf{100} & \textbf{100} & \textbf{100} & 94 & 80 & \textbf{88} & 80 & 46 \\
 & RHS ($p_0{=}5$) & \textbf{88} & 75 & 47 & 40 & 25 & \textbf{100} & \textbf{100} & \textbf{100} & \textbf{100} & \textbf{100} & 94 & \textbf{84} & 94 & 82 & 94 \\
\addlinespace
0.7 & R2-GP & 98 & 82 & 18 & 6 & 1 & \textbf{100} & \textbf{100} & \textbf{100} & \textbf{100} & \textbf{100} & \textbf{92} & \textbf{88} & 82 & \textbf{90} & \textbf{90} \\
 & Normal & 94 & 95 & \textbf{97} & \textbf{94} & \textbf{99} & \textbf{100} & \textbf{100} & \textbf{100} & \textbf{100} & \textbf{100} & \textbf{92} & 82 & 68 & 18 & 0 \\
 & NG & 98 & \textbf{91} & 66 & 56 & 55 & \textbf{100} & \textbf{100} & \textbf{100} & \textbf{100} & \textbf{100} & \textbf{92} & 82 & 86 & \textbf{90} & 76 \\
 & RHS ($p_0{=}5$) & \textbf{88} & 76 & 45 & 38 & 26 & \textbf{100} & \textbf{100} & \textbf{100} & \textbf{100} & \textbf{100} & \textbf{92} & 84 & \textbf{88} & 86 & \textbf{90} \\
\bottomrule
\end{tabular}
\end{table*}
  \endgroup
\simtablefitted{Relative RMSE, Sparse DGP, SE Kernel. As in Table~\ref{tab:rmse-dense-se}, with the RMSE of $\alpha$ taken separately across the active and the inactive components, and the RMSE of $\rho$ across the active components. At $D = 5$ all components are active.}{tables/sparse/se/parameter_recovery}
  \begingroup
  \simtablesetup{Sampling Diagnostics, Sparse DGP, SE Kernel. As in Table~\ref{tab:sampling-dense-se}.}%
  \begin{table*}[t]
\centering
\scriptsize
\setlength{\tabcolsep}{2.5pt}
\caption{Sampling behaviour (sparse DGP, squared exponential kernel), each statistic averaged over 50 replications. Columns give the number of additive components $D$. $\widehat{R}$ is the largest and ESS the smallest bulk effective sample size over all $\alpha_d$, $\rho_d$, $\sigma$ and $\mathrm{lp}$ of a fit; divergences are out of 4000 post-warmup draws; time is the sampling time in seconds. Bold marks the best value: the $\widehat{R}$ closest to one, the largest ESS, the fewest divergences and the shortest time, in each column within a length-scale. All models use the $N(\rho, 0.1)$ prior centred at the generating length-scale on the length-scales.}\label{tab:sampling-sparse-se}
\begin{tabular}{ll *{20}{r}}
\toprule
 &  & \multicolumn{5}{c}{$\widehat{R}$} & \multicolumn{5}{c}{ESS} & \multicolumn{5}{c}{Divergences} & \multicolumn{5}{c}{Time (s)} \\
\cmidrule(lr){3-7} \cmidrule(lr){8-12} \cmidrule(lr){13-17} \cmidrule(lr){18-22}
$\rho$ & Prior & 5 & 10 & 30 & 50 & 100 & 5 & 10 & 30 & 50 & 100 & 5 & 10 & 30 & 50 & 100 & 5 & 10 & 30 & 50 & 100 \\
\midrule
0.3 & R2-GP & \textbf{1.00} & \textbf{1.00} & \textbf{1.01} & \textbf{1.01} & \textbf{1.01} & \textbf{1040} & \textbf{1003} & \textbf{843} & \textbf{645} & 463 & \textbf{0.0} & \textbf{0.0} & \textbf{0.0} & \textbf{0.5} & \textbf{11} & \textbf{6.2} & \textbf{9.1} & \textbf{26} & \textbf{49} & \textbf{124} \\
 & Normal & \textbf{1.00} & 1.01 & \textbf{1.01} & 1.18 & 1.29 & 946 & 901 & 503 & 51 & 20 & 1.4 & 1.7 & 3.9 & 251 & 253 & 8.0 & 15 & 75 & 437 & 1377 \\
 & NG & 1.01 & 1.03 & 1.15 & 1.32 & 1.24 & 524 & 243 & 34 & 17 & 43 & 5.7 & 28 & 95 & 110 & 81 & 12 & 38 & 212 & 356 & 904 \\
 & RHS ($p_0{=}5$) & \textbf{1.00} & 1.01 & \textbf{1.01} & 1.02 & 1.02 & 904 & 832 & 627 & 417 & \textbf{487} & 3.0 & 3.4 & 5.0 & 12 & 21 & 8.6 & 13 & 35 & 65 & 156 \\
\addlinespace
0.5 & R2-GP & \textbf{1.00} & \textbf{1.00} & \textbf{1.00} & \textbf{1.01} & \textbf{1.01} & \textbf{1191} & \textbf{1186} & \textbf{1087} & \textbf{960} & \textbf{648} & \textbf{0.1} & \textbf{0.0} & \textbf{0.0} & \textbf{0.1} & \textbf{1.9} & \textbf{7.3} & \textbf{11} & \textbf{32} & \textbf{53} & \textbf{137} \\
 & Normal & \textbf{1.00} & \textbf{1.00} & 1.01 & 1.02 & 1.51 & 1054 & 1047 & 836 & 250 & 12 & 2.4 & 1.5 & 2.3 & 70 & 222 & 10 & 19 & 87 & 303 & 1331 \\
 & NG & 1.03 & 1.02 & 1.11 & 1.20 & 1.59 & 594 & 293 & 42 & 23 & 19 & 11 & 20 & 121 & 146 & 92 & 13 & 33 & 224 & 435 & 867 \\
 & RHS ($p_0{=}5$) & \textbf{1.00} & 1.01 & 1.01 & \textbf{1.01} & 1.03 & 1010 & 925 & 689 & 625 & 547 & 6.5 & 4.2 & 11 & 16 & 40 & 10 & 16 & 40 & 72 & 169 \\
\addlinespace
0.7 & R2-GP & \textbf{1.00} & \textbf{1.00} & \textbf{1.01} & \textbf{1.01} & \textbf{1.01} & \textbf{1290} & \textbf{1266} & \textbf{1204} & \textbf{1115} & \textbf{811} & \textbf{0.7} & \textbf{0.1} & \textbf{0.1} & \textbf{0.1} & \textbf{1.4} & \textbf{8.3} & \textbf{13} & \textbf{37} & \textbf{64} & \textbf{167} \\
 & Normal & \textbf{1.00} & \textbf{1.00} & \textbf{1.01} & \textbf{1.01} & 1.50 & 1094 & 1108 & 949 & 562 & 11 & 4.0 & 1.7 & 2.7 & 9.2 & 181 & 12 & 26 & 123 & 308 & 1303 \\
 & NG & 1.01 & 1.03 & 1.11 & 1.22 & 1.59 & 674 & 292 & 46 & 22 & 9 & 25 & 53 & 161 & 175 & 89 & 16 & 44 & 260 & 467 & 1047 \\
 & RHS ($p_0{=}5$) & \textbf{1.00} & 1.01 & \textbf{1.01} & \textbf{1.01} & \textbf{1.01} & 1067 & 1027 & 828 & 763 & 612 & 9.1 & 5.8 & 16 & 11 & 21 & 12 & 18 & 50 & 85 & 199 \\
\bottomrule
\end{tabular}
\end{table*}
  \endgroup

\clearpage

  \begingroup
  \simtablesetup{Interval Coverage (\%), Sparse DGP, Mat\'ern $3/2$ Kernel. As in Table~\ref{tab:coverage-sparse-se}.}%
  \begin{table*}[t]
\centering
\footnotesize
\setlength{\tabcolsep}{3pt}
\caption{Coverage (\%) of the nominal $90\%$ posterior interval (sparse DGP, Mat\'ern $3/2$ kernel), averaged over 50 replications. Columns give the number of additive components $D$. For $\alpha$ and $\rho$ it is the share of the active components whose interval contains the truth; the interval of a positive parameter cannot contain the exact zero of an inactive one. Bold marks the coverage closest to $90\%$ in each column within a length-scale. All models use the $N(\rho, 0.1)$ prior centred at the generating length-scale on the length-scales.}\label{tab:coverage-sparse-m32}
\begin{tabular}{ll *{15}{r}}
\toprule
 &  & \multicolumn{5}{c}{$\alpha$} & \multicolumn{5}{c}{$\rho$} & \multicolumn{5}{c}{$\sigma$} \\
\cmidrule(lr){3-7} \cmidrule(lr){8-12} \cmidrule(lr){13-17}
$\rho$ & Prior & 5 & 10 & 30 & 50 & 100 & 5 & 10 & 30 & 50 & 100 & 5 & 10 & 30 & 50 & 100 \\
\midrule
0.3 & R2-GP & 98 & 80 & 27 & 11 & 0 & \textbf{100} & \textbf{100} & \textbf{100} & \textbf{100} & \textbf{100} & \textbf{94} & \textbf{88} & 98 & \textbf{88} & \textbf{94} \\
 & Normal & 95 & \textbf{95} & \textbf{94} & \textbf{97} & \textbf{100} & \textbf{100} & \textbf{100} & \textbf{100} & \textbf{100} & \textbf{100} & 98 & 86 & 2 & 0 & 0 \\
 & NG & 95 & 84 & 70 & 69 & 64 & \textbf{100} & \textbf{100} & \textbf{100} & \textbf{100} & \textbf{100} & 98 & 86 & 86 & 64 & 0 \\
 & RHS ($p_0{=}5$) & \textbf{88} & 68 & 49 & 33 & 24 & \textbf{100} & \textbf{100} & \textbf{100} & \textbf{100} & \textbf{100} & 82 & 86 & \textbf{90} & 66 & 80 \\
\addlinespace
0.5 & R2-GP & 97 & 80 & 18 & 6 & 0 & \textbf{100} & \textbf{100} & \textbf{100} & \textbf{100} & \textbf{100} & \textbf{88} & \textbf{90} & \textbf{88} & \textbf{92} & \textbf{90} \\
 & Normal & 93 & \textbf{93} & \textbf{96} & \textbf{98} & \textbf{100} & \textbf{100} & \textbf{100} & \textbf{100} & \textbf{100} & \textbf{100} & 84 & 86 & 26 & 0 & 0 \\
 & NG & 94 & 85 & 60 & 59 & 62 & \textbf{100} & \textbf{100} & \textbf{100} & \textbf{100} & \textbf{100} & 84 & 86 & \textbf{92} & 78 & 18 \\
 & RHS ($p_0{=}5$) & \textbf{88} & 70 & 38 & 33 & 30 & \textbf{100} & \textbf{100} & \textbf{100} & \textbf{100} & \textbf{100} & 86 & 88 & 94 & 82 & 82 \\
\addlinespace
0.7 & R2-GP & 98 & 80 & 19 & 7 & 0 & \textbf{100} & \textbf{100} & \textbf{100} & \textbf{100} & \textbf{100} & \textbf{88} & \textbf{86} & \textbf{96} & 96 & \textbf{86} \\
 & Normal & 94 & 93 & \textbf{96} & \textbf{96} & \textbf{100} & \textbf{100} & \textbf{100} & \textbf{100} & \textbf{100} & \textbf{100} & \textbf{88} & 82 & 40 & 0 & 0 \\
 & NG & 94 & \textbf{88} & 59 & 58 & 64 & \textbf{100} & \textbf{100} & \textbf{100} & \textbf{100} & \textbf{100} & \textbf{88} & 84 & \textbf{96} & \textbf{90} & 22 \\
 & RHS ($p_0{=}5$) & \textbf{87} & 71 & 40 & 34 & 32 & \textbf{100} & \textbf{100} & \textbf{100} & \textbf{100} & \textbf{100} & 84 & 84 & \textbf{96} & 80 & 82 \\
\bottomrule
\end{tabular}
\end{table*}
  \endgroup
\simtablefitted{Relative RMSE, Sparse DGP, Mat\'ern $3/2$ Kernel. As in Table~\ref{tab:rmse-sparse-se}.}{tables/sparse/m32/parameter_recovery}
  \begingroup
  \simtablesetup{Sampling Diagnostics, Sparse DGP, Mat\'ern $3/2$ Kernel. As in Table~\ref{tab:sampling-dense-se}.}%
  \begin{table*}[t]
\centering
\scriptsize
\setlength{\tabcolsep}{2.5pt}
\caption{Sampling behaviour (sparse DGP, Mat\'ern $3/2$ kernel), each statistic averaged over 50 replications. Columns give the number of additive components $D$. $\widehat{R}$ is the largest and ESS the smallest bulk effective sample size over all $\alpha_d$, $\rho_d$, $\sigma$ and $\mathrm{lp}$ of a fit; divergences are out of 4000 post-warmup draws; time is the sampling time in seconds. Bold marks the best value: the $\widehat{R}$ closest to one, the largest ESS, the fewest divergences and the shortest time, in each column within a length-scale. All models use the $N(\rho, 0.1)$ prior centred at the generating length-scale on the length-scales.}\label{tab:sampling-sparse-m32}
\begin{tabular}{ll *{20}{r}}
\toprule
 &  & \multicolumn{5}{c}{$\widehat{R}$} & \multicolumn{5}{c}{ESS} & \multicolumn{5}{c}{Divergences} & \multicolumn{5}{c}{Time (s)} \\
\cmidrule(lr){3-7} \cmidrule(lr){8-12} \cmidrule(lr){13-17} \cmidrule(lr){18-22}
$\rho$ & Prior & 5 & 10 & 30 & 50 & 100 & 5 & 10 & 30 & 50 & 100 & 5 & 10 & 30 & 50 & 100 & 5 & 10 & 30 & 50 & 100 \\
\midrule
0.3 & R2-GP & \textbf{1.00} & \textbf{1.00} & \textbf{1.01} & \textbf{1.01} & \textbf{1.01} & \textbf{1049} & \textbf{1029} & \textbf{759} & \textbf{555} & \textbf{426} & \textbf{0.0} & \textbf{0.0} & \textbf{0.1} & \textbf{4.1} & \textbf{6.2} & \textbf{7.9} & \textbf{14} & \textbf{45} & \textbf{86} & \textbf{282} \\
 & Normal & \textbf{1.00} & 1.01 & 1.06 & 1.28 & 1.37 & 925 & 840 & 209 & 25 & 14 & 0.5 & 0.6 & 59 & 248 & 233 & 9.8 & 21 & 188 & 847 & 3109 \\
 & NG & 1.01 & 1.03 & 1.15 & 1.36 & 1.15 & 518 & 240 & 40 & 23 & 41 & 2.8 & 29 & 61 & 95 & 77 & 16 & 53 & 312 & 559 & 1934 \\
 & RHS ($p_0{=}5$) & 1.01 & 1.01 & \textbf{1.01} & \textbf{1.01} & 1.03 & 849 & 750 & 458 & 475 & 416 & 2.0 & 0.9 & 16 & 11 & 41 & 11 & 18 & 58 & 107 & 345 \\
\addlinespace
0.5 & R2-GP & \textbf{1.00} & \textbf{1.00} & \textbf{1.01} & \textbf{1.01} & \textbf{1.01} & \textbf{1181} & \textbf{1156} & \textbf{1020} & \textbf{750} & \textbf{483} & \textbf{0.1} & \textbf{0.0} & \textbf{0.1} & \textbf{0.1} & \textbf{6.0} & \textbf{9.1} & \textbf{15} & \textbf{48} & \textbf{89} & 324 \\
 & Normal & \textbf{1.00} & \textbf{1.00} & \textbf{1.01} & 1.19 & 1.33 & 1041 & 1000 & 576 & 40 & 18 & 0.8 & 1.4 & 9.2 & 201 & 223 & 12 & 25 & 146 & 738 & 2664 \\
 & NG & 1.01 & 1.04 & 1.14 & 1.24 & 1.32 & 599 & 271 & 47 & 22 & 35 & 5.6 & 35 & 114 & 151 & 88 & 18 & 53 & 333 & 654 & 1844 \\
 & RHS ($p_0{=}5$) & \textbf{1.00} & 1.01 & \textbf{1.01} & 1.02 & 1.02 & 984 & 888 & 704 & 542 & 419 & 2.3 & 3.5 & 9.1 & 28 & 39 & 13 & 23 & 60 & 107 & \textbf{323} \\
\addlinespace
0.7 & R2-GP & \textbf{1.00} & \textbf{1.00} & \textbf{1.01} & \textbf{1.01} & \textbf{1.01} & \textbf{1239} & \textbf{1239} & \textbf{1121} & \textbf{904} & \textbf{581} & \textbf{0.5} & \textbf{0.0} & \textbf{0.1} & \textbf{0.4} & \textbf{3.4} & \textbf{11} & \textbf{18} & \textbf{53} & \textbf{101} & \textbf{372} \\
 & Normal & \textbf{1.00} & 1.01 & \textbf{1.01} & 1.07 & 1.37 & 1069 & 1057 & 782 & 112 & 17 & 2.1 & 1.0 & 2.4 & 170 & 204 & 15 & 32 & 162 & 666 & 2537 \\
 & NG & 1.01 & 1.02 & 1.12 & 1.20 & 1.41 & 693 & 310 & 43 & 25 & 21 & 17 & 40 & 106 & 165 & 108 & 22 & 65 & 382 & 720 & 1967 \\
 & RHS ($p_0{=}5$) & \textbf{1.00} & 1.01 & \textbf{1.01} & \textbf{1.01} & 1.03 & 1014 & 964 & 748 & 633 & 463 & 5.1 & 5.7 & 8.2 & 15 & 43 & 15 & 25 & 68 & 130 & 466 \\
\bottomrule
\end{tabular}
\end{table*}
  \endgroup

\clearpage

  \begingroup
  \simtablesetup{Interval Coverage (\%), Sparse DGP, Mat\'ern $5/2$ Kernel. As in Table~\ref{tab:coverage-sparse-se}.}%
  \begin{table*}[t]
\centering
\footnotesize
\setlength{\tabcolsep}{3pt}
\caption{Coverage (\%) of the nominal $90\%$ posterior interval (sparse DGP, Mat\'ern $5/2$ kernel), averaged over 50 replications. Columns give the number of additive components $D$. For $\alpha$ and $\rho$ it is the share of the active components whose interval contains the truth; the interval of a positive parameter cannot contain the exact zero of an inactive one. Bold marks the coverage closest to $90\%$ in each column within a length-scale. All models use the $N(\rho, 0.1)$ prior centred at the generating length-scale on the length-scales.}\label{tab:coverage-sparse-m52}
\begin{tabular}{ll *{15}{r}}
\toprule
 &  & \multicolumn{5}{c}{$\alpha$} & \multicolumn{5}{c}{$\rho$} & \multicolumn{5}{c}{$\sigma$} \\
\cmidrule(lr){3-7} \cmidrule(lr){8-12} \cmidrule(lr){13-17}
$\rho$ & Prior & 5 & 10 & 30 & 50 & 100 & 5 & 10 & 30 & 50 & 100 & 5 & 10 & 30 & 50 & 100 \\
\midrule
0.3 & R2-GP & 92 & 82 & 23 & 14 & 1 & \textbf{100} & \textbf{100} & \textbf{100} & \textbf{100} & \textbf{100} & 82 & \textbf{92} & 94 & 98 & \textbf{92} \\
 & Normal & 92 & 96 & \textbf{96} & \textbf{96} & \textbf{100} & \textbf{100} & \textbf{100} & \textbf{100} & \textbf{100} & \textbf{100} & \textbf{88} & 86 & 12 & 0 & 0 \\
 & NG & \textbf{90} & \textbf{86} & 66 & 68 & 65 & \textbf{100} & \textbf{100} & \textbf{100} & \textbf{100} & \textbf{100} & 84 & 84 & 84 & 66 & 2 \\
 & RHS ($p_0{=}5$) & 81 & 67 & 42 & 40 & 26 & \textbf{100} & \textbf{100} & \textbf{100} & \textbf{100} & \textbf{100} & 82 & 86 & \textbf{90} & \textbf{92} & 74 \\
\addlinespace
0.5 & R2-GP & 98 & 74 & 18 & 13 & 0 & \textbf{100} & \textbf{100} & \textbf{100} & \textbf{100} & \textbf{100} & \textbf{90} & \textbf{92} & 94 & \textbf{90} & \textbf{86} \\
 & Normal & 94 & \textbf{93} & \textbf{97} & \textbf{92} & \textbf{100} & \textbf{100} & \textbf{100} & \textbf{100} & \textbf{100} & \textbf{100} & 88 & \textbf{88} & 46 & 4 & 0 \\
 & NG & 94 & 84 & 63 & 64 & 64 & \textbf{100} & \textbf{100} & \textbf{100} & \textbf{100} & \textbf{100} & 86 & \textbf{92} & \textbf{90} & 86 & 10 \\
 & RHS ($p_0{=}5$) & \textbf{88} & 66 & 43 & 44 & 28 & \textbf{100} & \textbf{100} & \textbf{100} & \textbf{100} & \textbf{100} & 88 & \textbf{88} & 80 & 94 & 78 \\
\addlinespace
0.7 & R2-GP & 98 & 76 & 19 & 10 & 0 & \textbf{100} & \textbf{100} & \textbf{100} & \textbf{100} & \textbf{100} & 88 & 94 & 94 & \textbf{90} & 86 \\
 & Normal & 93 & 94 & \textbf{95} & \textbf{93} & \textbf{99} & \textbf{100} & \textbf{100} & \textbf{100} & \textbf{100} & \textbf{100} & \textbf{90} & \textbf{90} & 52 & 10 & 0 \\
 & NG & 94 & \textbf{87} & 62 & 61 & 70 & \textbf{100} & \textbf{100} & \textbf{100} & \textbf{100} & \textbf{100} & 88 & \textbf{90} & \textbf{88} & 86 & 34 \\
 & RHS ($p_0{=}5$) & \textbf{89} & 70 & 42 & 45 & 32 & \textbf{100} & \textbf{100} & \textbf{100} & \textbf{100} & \textbf{100} & 86 & 92 & 86 & \textbf{90} & \textbf{88} \\
\bottomrule
\end{tabular}
\end{table*}
  \endgroup
\simtablefitted{Relative RMSE, Sparse DGP, Mat\'ern $5/2$ Kernel. As in Table~\ref{tab:rmse-sparse-se}.}{tables/sparse/m52/parameter_recovery}
  \begingroup
  \simtablesetup{Sampling Diagnostics, Sparse DGP, Mat\'ern $5/2$ Kernel. As in Table~\ref{tab:sampling-dense-se}.}%
  \begin{table*}[t]
\centering
\scriptsize
\setlength{\tabcolsep}{2.5pt}
\caption{Sampling behaviour (sparse DGP, Mat\'ern $5/2$ kernel), each statistic averaged over 50 replications. Columns give the number of additive components $D$. $\widehat{R}$ is the largest and ESS the smallest bulk effective sample size over all $\alpha_d$, $\rho_d$, $\sigma$ and $\mathrm{lp}$ of a fit; divergences are out of 4000 post-warmup draws; time is the sampling time in seconds. Bold marks the best value: the $\widehat{R}$ closest to one, the largest ESS, the fewest divergences and the shortest time, in each column within a length-scale. All models use the $N(\rho, 0.1)$ prior centred at the generating length-scale on the length-scales.}\label{tab:sampling-sparse-m52}
\begin{tabular}{ll *{20}{r}}
\toprule
 &  & \multicolumn{5}{c}{$\widehat{R}$} & \multicolumn{5}{c}{ESS} & \multicolumn{5}{c}{Divergences} & \multicolumn{5}{c}{Time (s)} \\
\cmidrule(lr){3-7} \cmidrule(lr){8-12} \cmidrule(lr){13-17} \cmidrule(lr){18-22}
$\rho$ & Prior & 5 & 10 & 30 & 50 & 100 & 5 & 10 & 30 & 50 & 100 & 5 & 10 & 30 & 50 & 100 & 5 & 10 & 30 & 50 & 100 \\
\midrule
0.3 & R2-GP & \textbf{1.00} & \textbf{1.00} & \textbf{1.01} & \textbf{1.01} & \textbf{1.02} & \textbf{1033} & \textbf{1032} & \textbf{745} & \textbf{582} & \textbf{386} & \textbf{0.0} & \textbf{0.0} & \textbf{0.1} & \textbf{2.1} & \textbf{20} & \textbf{7.6} & \textbf{12} & \textbf{37} & \textbf{75} & \textbf{234} \\
 & Normal & \textbf{1.00} & 1.01 & 1.02 & 1.22 & 1.34 & 930 & 867 & 368 & 30 & 18 & 1.0 & 0.5 & 32 & 228 & 231 & 9.7 & 20 & 130 & 768 & 2202 \\
 & NG & 1.01 & 1.04 & 1.14 & 1.31 & 1.13 & 536 & 239 & 42 & 23 & 46 & 10 & 13 & 65 & 81 & 43 & 20 & 47 & 304 & 484 & 1628 \\
 & RHS ($p_0{=}5$) & 1.01 & 1.01 & \textbf{1.01} & 1.02 & \textbf{1.02} & 879 & 768 & 482 & 432 & 376 & 2.2 & 1.8 & 9.7 & 14 & 26 & 11 & 18 & 48 & 95 & 295 \\
\addlinespace
0.5 & R2-GP & \textbf{1.00} & \textbf{1.00} & \textbf{1.01} & \textbf{1.01} & \textbf{1.01} & \textbf{1163} & \textbf{1171} & \textbf{1036} & \textbf{812} & \textbf{570} & \textbf{0.2} & \textbf{0.0} & \textbf{0.1} & \textbf{0.3} & \textbf{12} & \textbf{8.9} & \textbf{14} & \textbf{41} & \textbf{81} & \textbf{236} \\
 & Normal & \textbf{1.00} & 1.01 & \textbf{1.01} & 1.16 & 1.38 & 1004 & 1026 & 706 & 77 & 17 & 1.8 & 1.3 & 2.7 & 129 & 189 & 13 & 25 & 131 & 606 & 2350 \\
 & NG & 1.01 & 1.04 & 1.14 & 1.22 & 1.39 & 623 & 233 & 41 & 20 & 25 & 6.7 & 43 & 137 & 134 & 74 & 17 & 53 & 364 & 590 & 1465 \\
 & RHS ($p_0{=}5$) & \textbf{1.00} & 1.01 & \textbf{1.01} & \textbf{1.01} & 1.02 & 977 & 912 & 695 & 548 & 425 & 2.6 & 5.5 & 7.9 & 15 & 31 & 13 & 19 & 56 & 110 & 323 \\
\addlinespace
0.7 & R2-GP & \textbf{1.00} & \textbf{1.00} & \textbf{1.01} & \textbf{1.01} & \textbf{1.01} & \textbf{1234} & \textbf{1262} & \textbf{1168} & \textbf{1006} & \textbf{701} & \textbf{0.2} & \textbf{0.0} & \textbf{0.1} & \textbf{0.1} & \textbf{1.1} & \textbf{11} & \textbf{18} & \textbf{51} & \textbf{95} & \textbf{311} \\
 & Normal & \textbf{1.00} & 1.01 & \textbf{1.01} & 1.05 & 1.56 & 1069 & 1083 & 898 & 298 & 12 & 5.0 & 2.1 & 1.4 & 48 & 185 & 15 & 30 & 161 & 573 & 2116 \\
 & NG & 1.01 & 1.05 & 1.11 & 1.19 & 1.65 & 653 & 270 & 44 & 25 & 14 & 13 & 51 & 164 & 146 & 56 & 19 & 67 & 380 & 716 & 1724 \\
 & RHS ($p_0{=}5$) & \textbf{1.00} & 1.01 & \textbf{1.01} & \textbf{1.01} & 1.02 & 1010 & 1018 & 827 & 666 & 496 & 7.0 & 7.9 & 9.2 & 14 & 30 & 15 & 25 & 64 & 131 & 374 \\
\bottomrule
\end{tabular}
\end{table*}
  \endgroup

\clearpage

\subsection{Non-normalised Competing Priors}
\label{app:sim-unnorm}

All comparisons above apply the competing priors to the normalised components \eqref{eq:fd}. Here we place the normal, NG and RHS priors on $\alpha_d$ in the usual HSGP parameterisation, $f_d = \alpha_d g_d$, in which $\alpha_d^2$ is the marginal variance. R2-GP remains the reference, and all other settings are unchanged. Since $\alpha_d$ is then on a different scale from the generating value, we compare predictive performance. Figures~\ref{fig:unnorm-elpd-dense-se} to \ref{fig:unnorm-elpd-sparse-m52} show that the qualitative behaviour of the competing priors is the same as with the normalisation.

In the dense DGP, R2-GP attains the highest mean elpd in each condition against every non-normalised prior. At $D = 100$ the mean $\Delta\mathrm{elpd}$ lies between $-149$ and $-105$ for the normal prior, between $-47$ and $-33$ for NG, and between $-9.1$ and $-4.5$ for RHS. Since $Q_d < 1$ for the kernels considered, a given prior on $\alpha_d$ implies less signal variance without the normalisation. This lessens the overestimation under the normal and NG priors, whose mean elpd improves by up to $37$ and $24$, but does not remove it. For RHS the mean elpd is lower by at most $1.4$ in both DGPs. In the sparse DGP, RHS again attains a higher mean elpd than R2-GP for $D \ge 30$, by at most $12.1$, and the normal prior again underperforms, with a mean $\Delta\mathrm{elpd}$ between $-150$ and $-106$ at $D = 100$. At $D = 100$ the bias in $\Remp$ has the same sign and a similar range for each prior as with the normalisation, and the normal and NG priors again fail to converge (not shown).

\begin{figure}[!htbp]
  \centering
  \resizebox{0.85\textwidth}{!}{\input{figures/tikz/appendix/unnormalised/elpd_difference_unnormalised_paper_dense_se}}
  \caption{Non-normalised Competing Priors, Dense DGP, SE Kernel. As in Figure~\ref{fig:dense-elpd-se}, with the normal, NG and RHS priors on $\alpha_d$ in $f_d = \alpha_d g_d$.}
  \label{fig:unnorm-elpd-dense-se}
\end{figure}

\begin{figure}[!htbp]
  \centering
  \resizebox{0.85\textwidth}{!}{\input{figures/tikz/appendix/unnormalised/elpd_difference_unnormalised_paper_dense_m32}}
  \caption{Non-normalised Competing Priors, Dense DGP, Mat\'ern $3/2$ Kernel. As in Figure~\ref{fig:unnorm-elpd-dense-se}.}
  \label{fig:unnorm-elpd-dense-m32}
\end{figure}

\begin{figure}[p]
  \centering
  \resizebox{0.85\textwidth}{!}{\input{figures/tikz/appendix/unnormalised/elpd_difference_unnormalised_paper_dense_m52}}
  \caption{Non-normalised Competing Priors, Dense DGP, Mat\'ern $5/2$ Kernel. As in Figure~\ref{fig:unnorm-elpd-dense-se}.}
  \label{fig:unnorm-elpd-dense-m52}
\end{figure}

\begin{figure}[p]
  \centering
  \resizebox{0.85\textwidth}{!}{\input{figures/tikz/appendix/unnormalised/elpd_difference_unnormalised_paper_sparse_se}}
  \caption{Non-normalised Competing Priors, Sparse DGP, SE Kernel. As in Figure~\ref{fig:unnorm-elpd-dense-se}.}
  \label{fig:unnorm-elpd-sparse-se}
\end{figure}

\begin{figure}[p]
  \centering
  \resizebox{0.85\textwidth}{!}{\input{figures/tikz/appendix/unnormalised/elpd_difference_unnormalised_paper_sparse_m32}}
  \caption{Non-normalised Competing Priors, Sparse DGP, Mat\'ern $3/2$ Kernel. As in Figure~\ref{fig:unnorm-elpd-dense-se}.}
  \label{fig:unnorm-elpd-sparse-m32}
\end{figure}

\begin{figure}[p]
  \centering
  \resizebox{0.85\textwidth}{!}{\input{figures/tikz/appendix/unnormalised/elpd_difference_unnormalised_paper_sparse_m52}}
  \caption{Non-normalised Competing Priors, Sparse DGP, Mat\'ern $5/2$ Kernel. As in Figure~\ref{fig:unnorm-elpd-dense-se}.}
  \label{fig:unnorm-elpd-sparse-m52}
\end{figure}

\clearpage

\section{Additional Case Study Results}
\label{app:add-case-study}

Figures~\ref{fig:case-study-12effects-se} and \ref{fig:case-study-12effects-m32} extend Figure~\ref{fig:case-study-2effects} to $12$ covariates, for the SE and the Mat\'ern $3/2$ kernel. The covariates are ranked by their posterior mean share of explained variance under R2-GP with the SE kernel, and both figures show the same covariates. For these figures, the models of Section~\ref{sec:case_study} are fitted to all $208$ compounds. Table~\ref{tab:casestudy-sample-diag} gives the predictive performance and the sampling diagnostics over the $10$ train/test splits.

\begin{figure}[!htbp]
  \centering
  \input{figures/tikz/appendix/se_effects_top12}
  \caption{Covariate Effects, BloodBrain Data, SE Kernel. Posterior means (lines) and pointwise $90\%$ intervals (bands) of the centred effects $f_d(x) - \bar{f}_d$ for the $12$ covariates with the largest share of explained variance. Ticks mark the observed covariate values.}
  \label{fig:case-study-12effects-se}
\end{figure}

\begin{figure}[!htbp]
  \centering
  \input{figures/tikz/appendix/m32_effects_top12}
  \caption{Covariate Effects, BloodBrain Data, Mat\'ern $3/2$ Kernel. As in Figure~\ref{fig:case-study-12effects-se}, for the same covariates.}
  \label{fig:case-study-12effects-m32}
\end{figure}

\begin{table}[!htbp]
\apptable
\small
\setlength{\tabcolsep}{4pt}
\begin{threeparttable}
\caption{Case Study Predictive and Sampling Diagnostics}\label{tab:casestudy-sample-diag}
\begin{tabular}{lrccrrrrrccrrrr}
\toprule
& \multicolumn{7}{c}{SE} & \multicolumn{7}{c}{Mat\'ern 3/2} \\
\cmidrule(lr){2-8} \cmidrule(lr){9-15}
Prior & ELPD & $R^2$ & $R^2_{\mathrm{oos}}$ & $\widehat{R}$ & ESS & Div. & Time & ELPD & $R^2$ & $R^2_{\mathrm{oos}}$ & $\widehat{R}$ & ESS & Div. & Time \\
\midrule
R2-GP & $-40.9$ & 0.73 & 0.56 & 1.008 & 529 & 0.0 & 51 & $-40.2$ & 0.77 & 0.56 & 1.016 & 479 & 0.0 & 126 \\
RHS & $-41.4$ & 0.72 & 0.55 & 1.023 & 178 & 0.2 & 72 & $-41.0$ & 0.76 & 0.55 & 1.023 & 150 & 0.0 & 176 \\
NG & $-41.3$ & 0.70 & 0.55 & 1.232 & 14 & 0.0 & 61 & $-40.7$ & 0.74 & 0.55 & 1.037 & 71 & 0.0 & 126 \\
Normal & $-46.0$ & 0.95 & 0.56 & 2.117 & 5 & 2.5 & 100 & $-48.9$ & 1.00 & 0.55 & 3.462 & 4 & 23.8 & 257 \\
\bottomrule
\end{tabular}
\begin{tablenotes}
    \item Note: Predictive performance and sampling behaviour of fitted models on the BloodBrain data over 10 random train/test split replications. ELPD and posterior $R^2$ (in-sample and out-of-sample) are reported as means over the replications. $\widehat{R}$ and ESS are reported as the worst case over all model parameters across replications; Div.\ is the mean number of divergent transitions per fit and time the median sampling time per fit in minutes.
\end{tablenotes}
\end{threeparttable}
\end{table}

\clearpage

\section{Computing Infrastructure}
\label{app:compute}

All computations ran on CPUs; no GPUs were used. The simulation study was run
on an institutional high-performance computing cluster managed by Slurm, on
nodes with Intel Xeon (Haswell to Cascade Lake) and AMD EPYC (Milan)
processors. Each fit was allocated four cores, one per chain, and 3\,GB of
memory, and the workflow was orchestrated with Snakemake~8.18. Models were
fitted with CmdStan~2.35.0 \citep{carpenter2017stan} through the R package
\textit{cmdstanr}~0.9.0 in R~4.5.3, compiled with GCC~12.3.

The simulation comprises $57\,750$ model fits, all of which completed.
Sampling took 1.3 minutes per fit at the median and 5.9 minutes on average;
$99\%$ of the fits finished within about an hour and the slowest took 4.7
hours. In total this is about $5\,700$ hours of four-core time, or $23\,000$
core-hours. The case study adds $80$ fits on the data splits, which took
between 27 minutes and 4.5 hours each (median 1.7 hours), about $650$
core-hours in total, and eight fits to the full data.

\end{document}